\documentclass[twoside]{article}

%
\usepackage[accepted]{aistats2019}

\usepackage{lmodern}
\usepackage{xr}

\usepackage[utf8]{inputenc} 
\usepackage[T1]{fontenc}    
\usepackage{hyperref}       
\usepackage{url}            
\usepackage{booktabs}       
\usepackage{amsfonts}       
\usepackage{nicefrac}       
\usepackage{microtype}      

\usepackage{stmaryrd}
\hypersetup{
    colorlinks,
    linkcolor={red},
    citecolor={blue},
    urlcolor={blue}
}

\usepackage{algorithm}
\usepackage[noend]{algorithmic}

\usepackage{float}

\usepackage{lipsum}
\usepackage{amsfonts}
\usepackage{amsmath} 
\usepackage{amsthm}
\usepackage{mathtools}
\usepackage[rgb,dvipsnames]{xcolor}
\usepackage{graphicx}
\graphicspath{{../figures/}}
\usepackage{float}

 \usepackage{relsize}

\usepackage{etoolbox}
\usepackage{xparse}


\newcommand{\R}{\Rbb}
\newcommand{\e}{\ten}





\renewcommand{\vec}[1]{\ensuremath{\mathbf{#1}}}
\newcommand{\vecs}[1]{\ensuremath{\mathbf{\boldsymbol{#1}}}}
\newcommand{\mat}[1]{\ensuremath{\mathbf{#1}}}

\newcommand{\ten}[1]{\mat{\ensuremath{\boldsymbol{\mathcal{#1}}}}}

\newcommand{\ttm}[1]{\times_{#1}}
\newcommand{\ttv}[1]{\bullet_{#1}}
\newcommand{\tenmat}[2]{\ten{#1}_{(#2)}}
\newcommand{\tenmatpar}[2]{(#1)_{(#2)}}
\newcommand{\tenmatgen}[2]{{(#1)}_{\langle\!\langle #2\rangle\!\rangle}}

\newcommand{\TT}[1]{\llbracket #1 \rrbracket}


\newcommand{\A}{\mat{A}}
\newcommand{\B}{\mat{B}}

\newcommand{\Hten}{\ten{H}}
\newcommand{\T}{\ten{T}}

\newcommand{\G}{\ten{G}}
\newcommand{\Y}{\ten{Y}}

\renewcommand{\H}{\mat{H}}

\newcommand{\X}{\mat{X}}
\newcommand{\I}{\mat{I}}

\newcommand{\M}{\mat{M}}

\renewcommand{\P}{\mat{P}}
\renewcommand{\S}{\mat{S}}
\renewcommand{\v}{\vec{v}}

\newcommand{\x}{\vec{x}}

\newcommand{\y}{\vec{y}}

\newtheorem*{theorem*}{Theorem}
\newtheorem*{corollary*}{Corollary}%
\newtheorem*{proposition*}{Proposition}%
\ifcsdef{theorem}{}{
\newtheorem{theorem}{Theorem}%
}
\newtheorem*{pbm*}{Problem}%
\newtheorem*{algo*}{Algorithm}%

\newcommand{\kron}{\otimes}


\DeclareMathOperator*{\argmin}{arg\,min}
\newcommand{\vectorize}[1]{\mathrm{vec}(#1)}

\DeclareMathOperator*{\rank}{rank}
\newcommand{\norm}[1]{\|#1\|}

\newcommand{\bigo}[1]{\mathcal{O}\left(#1\right)}

\newcommand{\pinv}{^\dagger}
\newcommand{\inv}{^{-1}}
\newcommand{\invtop}{^{-\top}}

\newcommand{\nstates}{n}

\newcommand{\szerosymbol}{\alpha}
\newcommand{\szero}{\vecs{\szerosymbol}}

\newcommand{\sinfsymbol}{\omega}
\newcommand{\sinf}{\vecs{\sinfsymbol}}

\DeclareDocumentCommand{\wa}{  O{A} O{\szero} O{\sinf} }%
{(#2,\{\mat{#1}^\sigma\}_{\sigma\in\Sigma},#3)}
\DeclareDocumentCommand{\waR}{  O{A} O{\Rbb^\nstates} O{\szero} O{\sinf} }%
{(#2,#3,\{\mat{#1}^\sigma\}_{\sigma\in\Sigma},#4)}

\newcommand{\vvsinfsymbol}{\Omega}
\newcommand{\vvsinf}{\vecs{\vvsinfsymbol}}
\DeclareDocumentCommand{\vvwa}{  O{A} O{\szero} O{\vvsinf} }%
{(#2,\{\mat{#1}^\sigma\}_{\sigma\in\Sigma},#3)}


\newcommand{\tzerosymbol}{\alpha}
\newcommand{\tzero}{\vecs{\tzerosymbol}}

\newcommand{\tinfsymbol}{\omega}
\newcommand{\tinf}{\vecs{\tinfsymbol}}


\DeclareDocumentCommand{\wta}{ O{T} O{\Rbb^\nstates} O{\tzero} O{\tinf} O{\Fcal}}%
{(#2,#3,\{\ten{#1}^g\}_{g\in #5_{\geq 1}},\{#4^\sigma\}_{\sigma\in #5_0})}

\DeclareDocumentCommand{\trees}{g}{\IfNoValueTF{#1}{\mathfrak{T}}{\mathfrak{T}_{#1}}}
\DeclareDocumentCommand{\contexts}{g}{\IfNoValueTF{#1}{\mathfrak{C}}{\mathfrak{C}_{#1}}}


\newcommand{\gwmprod}{\diamond}

\DeclareDocumentCommand{\gwm}{  O{M} O{\Fbb^\nstates}}{(#2, \{\ten{#1}^x\}_{x\in\Sigma})}
\DeclareDocumentCommand{\gwmcirc}{  O{M} O{\Rbb^\nstates}}{(#2, \{\mat{#1}^\sigma\}_{\sigma\in\Sigma})}
\DeclareDocumentCommand{\dgwm}{ O{M} O{\Fbb^\nstates}}{(#2, \{\ten{#1}^x\}_{x\in\Sigma},\gwmprod)}



\usepackage{pgffor}
\foreach \x in {A,...,Z}{%
\expandafter\xdef\csname \x bb\endcsname{\noexpand\ensuremath{\noexpand\mathbb{\x}}}
}

\foreach \x in {A,...,Z}{%
\expandafter\xdef\csname \x cal\endcsname{\noexpand\ensuremath{\noexpand\mathcal{\x}}}
}

\usepackage{pgffor}
\foreach \x in {A,...,Z}{%
\expandafter\xdef\csname \x ten\endcsname{\noexpand\ensuremath{\noexpand\ten{\x}}}
}

\foreach \x in {A,...,Z}{%
\expandafter\xdef\csname \x mat\endcsname{\noexpand\ensuremath{\noexpand\mat{\x}}}
}

\foreach \x in {A,...,Z}{%
\expandafter\xdef\csname \x vec\endcsname{\noexpand\ensuremath{\noexpand\mat{\x}}}
}

\newcommand{\h}{\vec{h}}
\newcommand{\ie}{i.e.\ }
\newcommand{\eg}{e.g.\ }
\renewcommand{\e}{\vec{e}}

\usepackage[symbol]{footmisc}

\renewcommand{\thefootnote}{\fnsymbol{footnote}}
\usepackage{todonotes}

\newcommand{\ceil}[1]{\lceil #1 \rceil}

\newcommand{\rev}[1]{#1}
\usepackage[normalem]{ulem}
%

\begin{document}
\twocolumn[

\aistatstitle{Connecting Weighted Automata and Recurrent Neural Networks through Spectral Learning}

\aistatsauthor{ Guillaume Rabusseau\footnotemark[1]\footnotemark[2] \And Tianyu Li\footnotemark[1]\footnotemark[3] \And  Doina Precup\footnotemark[1]\footnotemark[3] }
\runningauthor{ Guillaume Rabusseau, Tianyu Li,  Doina Precup }
\aistatsaddress{ \texttt{grabus@iro.umontreal.ca} \And \texttt{tianyu.li@mail.mcgill.ca} \And \texttt{dprecup@cs.mcgill.ca}  } 
]


\begin{abstract}
In this paper, we unravel  a fundamental connection between weighted finite automata~(WFAs) and second-order
recurrent neural networks~(2-RNNs): in the case of sequences of discrete symbols, WFAs and 2-RNNs with linear activation
functions are expressively equivalent. Motivated by this result, we build upon a recent extension of the spectral learning algorithm
to vector-valued WFAs and propose the first provable learning algorithm for linear 2-RNNs defined over sequences of continuous input vectors. 
This algorithm relies on estimating low rank sub-blocks of the so-called Hankel tensor, from which the parameters of a linear 2-RNN can be provably recovered. 
The performances of the proposed method are assessed in a simulation study.
\end{abstract}

\section{Introduction}


Many tasks in natural language processing, computational biology, reinforcement learning, and time series analysis rely on learning
with sequential data, \ie estimating functions defined over sequences of observations from training data.  
Weighted  finite automata~(WFAs) and recurrent neural networks~(RNNs) are two powerful and flexible classes of models which can efficiently represent such functions.
On the one hand, WFAs are tractable, they encompass a wide range of machine learning models~(they can for example compute any probability distribution defined by a hidden Markov 
model~(HMM)~\cite{denis2008rational} and can model the transition and observation behavior of
partially observable Markov decision processes~\cite{thon2015links}) and they offer appealing theoretical guarantees. In particular, 
the so-called 
\emph{spectral
methods} for learning HMMs~\cite{hsu2009spectral}, WFAs~\cite{bailly2009grammatical,balle2014spectral} and related models~\cite{glaude2016pac,boots2011closing}, 
provide an alternative to Expectation-Maximization based algorithms that is both computationally efficient and 
consistent. 
On the other hand, RNNs are 
remarkably expressive models --- they can represent any computable function~\cite{siegelmann1992computational} --- and they have successfully
tackled many practical problems in speech and audio recognition~\cite{graves2013speech,mikolov2011extensions,gers2000learning}, but
their theoretical  analysis is difficult. Even though recent work provides interesting results on their
expressive power~\cite{khrulkov2018expressive,yu2017long} as well as alternative training algorithms coming with learning guarantees~\cite{sedghi2016training},
the theoretical understanding of RNNs is still limited.

\footnotetext{\footnotemark[1] Mila \footnotemark[2] Université de Montréal \footnotemark[3] McGill University}

\renewcommand*{\thefootnote}{\arabic{footnote}}

In this work, we bridge a gap between these two classes of models by unraveling a fundamental connection between WFAs and second-order RNNs~(2-RNNs): 
\textit{when considering input sequences of discrete symbols, 2-RNNs with linear activation functions and WFAs are one and the same}, \ie they are expressively
equivalent and there exists a one-to-one mapping between the two classes~(moreover, this mapping conserves model sizes).  While connections between
finite state machines~(\eg deterministic finite automata)  and recurrent neural networks have been noticed and investigated in the past~(see \eg \cite{giles1992learning,omlin1996constructing}), to the
best of our knowledge this is the first time that such a \rev{rigorous} equivalence between linear 2-RNNs and \emph{weighted} automata is explicitly formalized. 
\rev{More precisely, we pinpoint exactly the class of recurrent neural architectures to which weighted automata are equivalent, namely second-order RNNs with
linear activation functions.}
This result naturally leads to the observation that linear 2-RNNs are a natural generalization of WFAs~(which take sequences of \emph{discrete} observations as
inputs) to sequences of \emph{continuous vectors}, and raises the question of whether the spectral learning algorithm for WFAs can be extended to linear 2-RNNs. 
The second contribution of this paper is to show that the answer is in the positive: building upon the spectral learning algorithm for vector-valued WFAs introduced
recently in~\cite{rabusseau2017multitask}, \emph{we propose the first provable learning algorithm for second-order RNNs with linear activation functions}.
Our learning algorithm relies on estimating  sub-blocks of the so-called Hankel tensor, from which the parameters of a 2-linear RNN can be recovered
using basic linear algebra operations. One of the key technical difficulties in designing this algorithm resides in estimating
these sub-blocks from training data where the inputs are sequences of \emph{continuous} vectors. 
We leverage multilinear properties of linear 2-RNNs and the fact that the Hankel sub-blocks can be reshaped into higher-order tensors of low tensor train rank~(a
result we believe is of independent interest) to perform this estimation efficiently using matrix sensing and tensor recovery techniques.
As a proof of concept, we validate our theoretical findings in a simulation study on toy examples where we experimentally compare  
different recovery methods and investigate the robustness of our algorithm to noise and rank mis-specification. \rev{We also show that refining the estimator returned
by our algorithm using stochastic gradient descent can lead to significant improvements.}

\rev{
\paragraph{Summary of contributions.} We formalize a \emph{strict equivalence between weighted automata and second-order RNNs with linear activation
functions}~(Section~\ref{sec:WFAs.and.2RNNs}), showing that linear 2-RNNs can be seen as a natural extension of (vector-valued) weighted automata for input sequences of \emph{continuous} vectors. We then
propose a \emph{consistent learning algorithm for linear 2-RNNs}~(Section~\ref{sec:Spectral.learning.of.2RNNs}).
The relevance of our contributions can be seen from two perspectives.
First, while learning feed-forward neural networks with linear activation functions is a trivial task (it reduces to linear or reduced-rank regression), this
is not at all the case for recurrent architectures with linear activation functions; to the best of our knowledge, our algorithm is the \emph{first consistent learning algorithm
for the class of functions computed by linear second-order recurrent networks}. Second, from the perspective of learning weighted automata, we propose a  natural extension of WFAs to continuous inputs and \emph{our learning algorithm addresses the long-standing limitation of the spectral learning method to discrete inputs}.
}

\paragraph{Related work.}
Combining the spectral learning algorithm for WFAs with matrix completion techniques~(a problem which is closely related to matrix sensing) has
been theoretically investigated in~\cite{balle2012spectral}. An extension of probabilistic transducers to continuous inputs~(along with a spectral learning algorithm) has been proposed in~\cite{recasens2013spectral}.
The connections between tensors  and RNNs have been previously leveraged to study the expressive power of RNNs in~\cite{khrulkov2018expressive}
and to achieve model compression in~\cite{yu2017long,yang2017tensor,tjandra2017compressing}. 
Exploring relationships between RNNs and automata has recently received a renewed interest~\cite{peng2018rational,chen2018recurrent,li2018nonlinear}. In particular, such connections have been explored for interpretability purposes~\cite{weiss2018extracting,ayache2018explaining} and the ability of RNNs to learn classes of formal languages
has been investigated in~\cite{avcu2017subregular}.  \rev{Connections between the tensor train decomposition and WFAs have been previously noticed in~\cite{critch2013algebraic,critch2014algebraic,rabusseau2016thesis}.} 
The predictive state RNN model introduced in~\cite{downey2017predictive} is closely related to 2-RNNs and the authors propose
to use the spectral learning algorithm for predictive state representations to initialize a gradient based algorithm; their approach however comes without
theoretical guarantees. Lastly, a provable algorithm for RNNs relying on the tensor method of moments has been proposed in~\cite{sedghi2016training} but
it is limited to first-order RNNs with quadratic activation functions~(which do not encompass linear 2-RNNs).

\emph{The proofs of the results given in the paper can be found in the supplementary material.}





\section{Preliminaries}\label{sec:prelim}
In this section, we first present basic notions of tensor algebra before introducing  second-order recurrent neural 
network, 
weighted finite automata and the spectral learning algorithm.
We start by introducing some notation.
For any integer $k$ we use $[k]$ to denote the set of integers from $1$ to $k$. We use
$\lceil l \rceil$ to denote the smallest integer greater or equal to $l$.
For any set $\Scal$, we denote by $\Scal^*=\bigcup_{k\in\Nbb}\Scal^k$ the set of all
finite-length sequences of elements of $\Scal$~(in particular, 
$\Sigma^*$ will denote the set of strings on a finite alphabet $\Sigma$). 
We use lower case bold letters  for vectors (\eg $\vec{v} \in \Rbb^{d_1}$),
upper case bold letters for matrices (\eg $\M \in \Rbb^{d_1 \times d_2}$) and
bold calligraphic letters for higher order tensors (\eg $\T \in \Rbb^{d_1
\times d_2 \times d_3}$). We use $\e_i$ to denote the $i$th canonical basis 
vector of $\R^d$~(where the dimension $d$ will always appear clearly from context).
The $d\times d$ identity matrix will be written as $\I_d$.
The $i$th row (resp. column) of a matrix $\M$ will be denoted by
$\M_{i,:}$ (resp. $\M_{:,i}$). This notation is extended to
slices of a tensor in the straightforward way.
If $\vec{v} \in \Rbb^{d_1}$ and $\vec{v}' \in \Rbb^{d_2}$, we use $\vec{v} \kron \vec{v}' \in \Rbb^{d_1
\cdot d_2}$ to denote the Kronecker product between vectors, and its
straightforward extension to matrices and tensors.
Given a matrix $\M \in \Rbb^{d_1 \times d_2}$, we use $\vectorize{\M} \in \Rbb^{d_1
\cdot d_2}$ to denote the column vector obtained by concatenating the columns of
$\M$. The inverse of $\M$ is denoted by $\M\inv$, its Moore-Penrose pseudo-inverse
by $\M\pinv$, and the transpose of its inverse by $\M\invtop$; the Frobenius norm
is denoted by $\norm{\M}_F$ and the nuclear norm by $\norm{\M}_*$.

\paragraph{Tensors.}
We first recall basic definitions of tensor algebra; more details can be found
in~\cite{Kolda09}. 
A \emph{tensor} $\T\in \Rbb^{d_1\times\cdots \times d_p}$ can simply be seen
as a multidimensional array $(\T_{i_1,\cdots,i_p}\ : \ i_n\in [d_n], n\in [p])$. The
\emph{mode-$n$} fibers of $\T$ are the vectors obtained by fixing all
indices except  the $n$th one, \eg $\T_{:,i_2,\cdots,i_p}\in\Rbb^{d_1}$.
The \emph{$n$th mode matricization} of $\T$ is the matrix having the
mode-$n$ fibers of $\T$ for columns and is denoted by
$\tenmat{T}{n}\in \Rbb^{d_n\times d_1\cdots d_{n-1}d_{n+1}\cdots d_p}$.
The vectorization of a tensor is defined by $\vectorize{\T}=\vectorize{\tenmat{T}{1}}$.
In the following $\T$ always denotes a tensor of size $d_1\times\cdots \times d_p$.

The \emph{mode-$n$ matrix product} of the tensor $\T$ and a matrix
$\X\in\Rbb^{m\times d_n}$ is a tensor  denoted by $\T\ttm{n}\X$. It is 
of size $d_1\times\cdots \times d_{n-1}\times m \times d_{n+1}\times
\cdots \times d_p$ and is defined by the relation 
$\Y = \T\ttm{n}\X \Leftrightarrow \tenmat{Y}{n} = \X\tenmat{T}{n}$.
The \emph{mode-$n$ vector product} of the tensor $\T$ and a vector
$\vec{v}\in\Rbb^{d_n}$ is a tensor defined by $\T\ttv{n}\vec{v} = \T\ttm{n}\vec{v}^\top
\in \Rbb^{d_1\times\cdots \times d_{n-1}\times d_{n+1}\times
\cdots \times d_p}$.
%
%
It is easy to check that the $n$-mode product satisfies $(\T\ttm{n}\mat{A})\ttm{n}\mat{B} = \T\ttm{n}\mat{BA}$
where we assume compatible dimensions of the tensor $\T$ and
the matrices $\A$ and $\B$.

Given strictly positive integers $n_1,\cdots, n_k$ satisfying
$\sum_i n_i = p$, we use the notation $\tenmatgen{\T}{n_1,n_2,\cdots,n_k}$ to denote the $k$th order tensor 
obtained by reshaping $\T$ into a tensor\footnote{Note that the specific ordering used to perform matricization, vectorization
and such a reshaping is not relevant as long as it is consistent across all operations.} of size 
$(\prod_{i_1=1}^{n_1} d_{i_1}) \times (\prod_{i_2=1}^{n_2} d_{n_1 + i_2}) \times \cdots \times (\prod_{i_k=1}^{n_k} d_{n_1+\cdots+n_{k-1} + i_k})$.
In particular we have $\tenmatgen{\T}{p} = \vectorize{\T}$ and $\tenmatgen{\T}{1,p-1} = \tenmat{\T}{1}$.

A rank $R$ \emph{tensor train (TT) decomposition}~\cite{oseledets2011tensor} of a tensor 
$\T\in\R^{d_1\times \cdots\times d_p}$ consists in factorizing $\T$ into the product of $p$ core tensors
$\G_1\in\R^{d_1\times R},\G_2\in\R^{R\times d_2\times R},
\cdots, \G_{p-1}\in\R^{R\times d_{p-1} \times R},
\G_p \in \R^{R\times d_p}$, and is defined\footnote{The classical definition of the TT-decomposition allows the rank $R$ to be different
for each mode, but this definition is sufficient for the purpose of this paper.} by
\begin{align*}
\MoveEqLeft\T_{i_1,\cdots,i_p} =  
&(\G_1)_{i_1,:}(\G_2)_{:,i_2,:}\cdots 
 (\G_{p-1})_{:,i_{p-1},:}(\G_p)_{:,i_p}
\end{align*}
for all indices $i_1\in[d_1],\cdots,i_p\in[d_p]$; we will use the notation $\T = \TT{\G_1,\cdots,\G_p}$
to denote such a decomposition. A tensor network representation of this decomposition is shown in Figure~\ref{fig:tn.TT}.
 While the problem of finding the best approximation of  TT-rank $R$
of a given tensor is NP-hard~\cite{hillar2013most}, 
a quasi-optimal SVD based compression algorithm~(TT-SVD) has been proposed 
in~\cite{oseledets2011tensor}.
It is worth mentioning that the TT decomposition is invariant under change of basis: 
for any invertible matrix $\M$ and any core tensors $\G_1,\G_2,\cdots,\G_p$, we have
$\TT{\G_1,\cdots,\G_p} = \TT{\G_1\ttm{2}\M\invtop,\G_2\ttm{1}\M\ttm{3}\M\invtop,\cdots,
\G_{p-1}\ttm{1}\M\ttm{3}\M\invtop,\G_p\ttm{1}\M}$.


\begin{figure}
\begin{center}
\resizebox{0.45\textwidth}{0.08\textwidth}{%
\begin{tikzpicture}
	\input{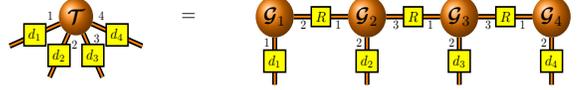}
	\node[tensor](G1){$\ten{G}_1$};
	\node[draw = none,below=0.8cm of G1](G11){};
	
	\node[tensor,right = 1cm of G1](G2){$\ten{G}_2$};
	\node[draw=none,below=0.8cm of G2](G22){};
	
	\node[tensor,right = 1cm of G2](G3){$\ten{G}_3$};
	\node[draw=none,below=0.8cm of G3](G32){};
	
	\node[tensor,right = 1cm of G3](G4){$\ten{G}_4$};
	\node[draw=none,below=0.8cm of G4](G41){};
	
	\node[tensor,left=3cm of G1](T){$\T$};
	\node[draw=none,below left = 0.2cm and 1cm of T](T1){};
	\node[draw=none,below left = 0.8cm and 0.2cm of T](T2){};
	\node[draw=none,below right = 0.8cm and 0.2cm of T](T3){};
	\node[draw=none,below right = 0.2cm and 1cm of T](T4){};
	\node[draw=none,right=1.5cm of T](eq){$=$};	
		
	\edgeports{T}{1}{above left}{T1}{}{}{$d_1$};
	\edgeports{T}{2}{below right}{T2}{}{}{$d_2$};
	\edgeports{T}{3}{below right=-0.1cm and 0.1cm}{T3}{}{}{$d_3$};
	\edgeports{T}{4}{above right}{T4}{}{}{$d_4$};
	
	\edgeports{G1}{1}{below left = -0.1cm and 0.01cm }{G11}{}{}{$d_1$};
	
	\edgeports{G2}{1}{below left}{G1}{2}{below right}{$R$};	
	\edgeports{G2}{2}{below left = -0.1cm and 0.01cm }{G22}{}{}{$d_2$};
	
	\edgeports{G3}{1}{below left}{G2}{3}{below right}{$R$};	
	\edgeports{G3}{2}{below left = -0.1cm and 0.01cm }{G32}{}{}{$d_3$};
	
	\edgeports{G4}{1}{below left}{G3}{3}{below right}{$R$};	
	\edgeports{G4}{2}{below left = -0.1cm and 0.01cm }{G41}{}{}{$d_4$};

\end{tikzpicture}
}%
\end{center}
\caption{Tensor network representation of a rank $R$ tensor train decomposition~(nodes represent tensors and an edge between two nodes
represents a contraction between the corresponding modes of the two tensors).}
\label{fig:tn.TT}
\end{figure}

\paragraph{Second-order RNNs.}
A \emph{second-order recurrent neural network} (2-RNN)~\cite{giles1990higher,pollack1991induction,lee1986machine}\footnote{Second-order reccurrent architectures have also been successfully used more recently, see \eg \cite{sutskever2011generating} and \cite{wu2016multiplicative}.} with $n$ hidden units can be defined as a tuple 
$M=(\h_0,\Aten,\vvsinf)$ where $\h_0\in\R^n$ is the initial state, $\Aten\in\R^{ n\times d \times n }$ is the transition tensor, and
$\vvsinf\in \R^{p\times n}$ is the output matrix,
with $d$ and $p$ being the input and output dimensions respectively. 
A 2-RNN maps any sequence of inputs $\x_1,\cdots,\x_k\in\R^d$ to
a sequence of outputs $\y_1,\cdots,\y_k\in\R^p$ defined for any $ t=1,\cdots,k$ by
\begin{equation}\label{eq:2RNN.definition}
\y_t = z_2(\vvsinf\h_t) \text{ with }\h_t = z_1(\Aten\ttv{1}\x_t\ttv{2}\h_{t-1})
\end{equation}
where $z_1:\R^n\to\R^n$ and $z_2:\R^p\to\R^p$ are activation functions.
Alternatively, one can think of a 2-RNN as computing
a function $f_M:(\R^d)^*\to\R^p$ mapping each input sequence $\x_1,\cdots,\x_k$ to the corresponding final output $\y_k$.
While $z_1$ and $z_2$ are usually non-linear
component-wise functions, we consider in this paper the case where both $z_1$ and $z_2$ are the identity, and we refer to
the resulting model as a \emph{linear 2-RNN}.
For a linear 2-RNN $M$, the function $f_M$ is multilinear in the sense that, for any integer $l$, its restriction to the domain $(\R^d)^l$ is
multilinear. Another useful observation is that linear 2-RNNs are invariant under change of basis: for any invertible matrix
$\P$, the linear 2-RNN $\tilde{M}=(\P\invtop\h_0,\Aten\ttm{1}\P\ttm{3}\P\invtop,\P\vvsinf)$ is such that $f_{\tilde{M}}=f_M$. A linear 2-RNN $M$ with $n$ states is called \emph{minimal} if its number of hidden units is minimal~(\ie any linear 2-RNN computing $f_M$ has
at least $n$ hidden units).

\paragraph{Weighted automata and spectral learning.} \emph{Vector-valued weighted finite automaton}~(vv-WFA) have
been introduced in~\cite{rabusseau2017multitask} as a natural generalization of weighted automata from scalar-valued functions
to vector-valued ones. A $p$-dimensional vv-WFA with $n$ states is a tuple $A=\vvwa$
where $\szero\in\R^n$ is the initial weights vector, $\vvsinf\in\R^{p\times n}$ is the matrix of final weights, 
and $\A^\sigma\in\R^{n\times n}$ is the transition matrix for each symbol $\sigma$ in a finite alphabet $\Sigma$.
A vv-WFA $A$ computes a function $f_A:\Sigma^*\to\R^p$ defined by 
$$f_A(x) =\vvsinf (\A^{x_1}\A^{x_2}\cdots\A^{x_k})^\top\szero $$
for each word $x=x_1x_2\cdots x_k\in\Sigma^*$. We call a vv-WFA \emph{minimal} if its number of states
is minimal. Given a function $f:\Sigma^*\to \Rbb^p$ we denote by $\rank(f)$ the number of states of a minimal vv-WFA computing $f$~(which is
set to $\infty$ if $f$ cannot be computed by a vv-WFA).

The  spectral learning algorithm for  vv-WFAs relies on the following fundamental theorem relating
the rank of a function $f:\Sigma^*\to\R^d$ to its Hankel tensor $\Hten \in \R^{\Sigma^*\times\Sigma^*\times p}$, which is defined
by $\Hten_{u,v,:} = f(uv)$ for all $u,v\in\Sigma^*$.
\begin{theorem}[\cite{rabusseau2017multitask}]
\label{thm:fliess-vvWFA}
Let $f:\Sigma^*\to\R^d$ and let $\Hten$ be its Hankel tensor. Then $\rank(f) = \rank(\tenmat{H}{1})$.
\end{theorem}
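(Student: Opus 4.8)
The plan is to prove the two inequalities $\rank(\tenmat{H}{1}) \le \rank(f)$ and $\rank(f) \le \rank(\tenmat{H}{1})$; the statement is the vector-valued analogue of the classical Fliess theorem, and the only genuine novelty over the scalar case is treating the output coordinate as an extra ``suffix'' dimension. I first note that the mode-1 matricization $\tenmat{H}{1}$ has rows indexed by prefixes $u\in\Sigma^*$ and columns indexed by pairs $(v,i)$ with $v\in\Sigma^*$ and $i\in[p]$, and that $(\tenmat{H}{1})_{u,(v,i)} = \Hten_{u,v,i} = f(uv)_i$.

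For the inequality $\rank(\tenmat{H}{1}) \le \rank(f)$ I would fix a minimal vv-WFA $A = \vvwa$ with $n = \rank(f)$ states computing $f$ and exploit multiplicativity. Writing $\A^u = \A^{u_1}\cdots\A^{u_{|u|}}$ and splitting a word as $uv$, the identity $f(uv)_i = \big((\A^u)^\top\szero\big)^\top\A^v(\vvsinf)_{i,:}^\top$ exhibits a factorization $\tenmat{H}{1} = \Q\B$ in which $\Q$ has rows $\big((\A^u)^\top\szero\big)^\top$ and $\B$ has columns $\A^v(\vvsinf)_{i,:}^\top$, both with inner dimension $n$; hence $\rank(\tenmat{H}{1})\le n = \rank(f)$.

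For the converse I may assume $r := \rank(\tenmat{H}{1})$ is finite (otherwise the bound is vacuous) and construct a vv-WFA with $r$ states. I would take a rank factorization $\tenmat{H}{1} = \P\S$ with $\P\in\R^{\Sigma^*\times r}$ of full column rank and $\S\in\R^{r\times(\Sigma^*\cdot p)}$ of full row rank, and set $\szero = \P_{\epsilon,:}^\top$, $(\vvsinf)_{i,:} = \S_{:,(\epsilon,i)}^\top$, and $\A^\sigma = \P\pinv\P^{(\sigma)}$, where $\P^{(\sigma)}$ denotes $\P$ with its rows reindexed by $u\mapsto u\sigma$ (so $\P^{(\sigma)}_{u,:} = \P_{u\sigma,:}$). \textbf{The main obstacle} is to show that these matrices $\A^\sigma$ correctly implement the ``append $\sigma$'' shift, i.e. $\P\A^\sigma = \P^{(\sigma)}$. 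The key is to read the entries of the shifted matrix $(H_\sigma)_{u,(v,i)} = f(u\sigma v)_i$ in two ways: grouping as $H_{u\sigma,(v,i)}$ shows $H_\sigma = \P^{(\sigma)}\S$, while grouping as $H_{u,(\sigma v,i)}$ shows that every column of $H_\sigma$ is literally a column of $\tenmat{H}{1}$. Since $\S$ has full row rank, right multiplication by $\S$ preserves column space, so $\mathrm{colspace}(\P^{(\sigma)}) = \mathrm{colspace}(H_\sigma) \subseteq \mathrm{colspace}(\tenmat{H}{1}) = \mathrm{colspace}(\P)$; this inclusion is exactly what makes $\P\A^\sigma = \P^{(\sigma)}$ solvable, and full column rank of $\P$ singles out $\A^\sigma = \P\pinv\P^{(\sigma)}$ as the unique solution. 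Cancelling the left factor $\P$ in the same two readings yields the companion identity $\A^\sigma\S_{:,(v,i)} = \S_{:,(\sigma v,i)}$.

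Finally I would conclude by two short inductions on word length. The shift identity gives $\P_{u,:} = \P_{\epsilon,:}\A^{u_1}\cdots\A^{u_{|u|}}$, hence $(\A^u)^\top\szero = \P_{u,:}^\top$; the companion identity gives $\A^v\S_{:,(\epsilon,i)} = \S_{:,(v,i)}$, hence $\A^v(\vvsinf)_{i,:}^\top = \S_{:,(v,i)}$. Combining the two, $f_A(uv)_i = (\vvsinf)_{i,:}(\A^v)^\top(\A^u)^\top\szero = \P_{u,:}\S_{:,(v,i)} = (\tenmat{H}{1})_{u,(v,i)} = f(uv)_i$, so $f_A = f$ and $\rank(f)\le r$. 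Together with the first bound this gives $\rank(f) = \rank(\tenmat{H}{1})$.
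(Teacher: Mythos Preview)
The paper does not actually contain a proof of this theorem: it is stated as a cited result from \cite{rabusseau2017multitask}, and the supplementary material only proves Theorems~\ref{thm:2RNN-vvWFA}, \ref{thm:2RNN-SL}, and~\ref{thm:learning-2RNN}. So there is nothing in the present paper to compare your argument against.

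That said, your proof is correct and is precisely the standard Fliess--Carlyle--Paz argument carried over to the vector-valued setting, which is also how the original reference proceeds. The two inequalities are handled exactly as one expects: the easy direction factors $\tenmat{H}{1}$ through the state space of a minimal automaton, and the constructive direction reads off an automaton from any rank factorization $\tenmat{H}{1}=\P\S$. Your treatment of the ``main obstacle'' --- showing $\mathrm{colspace}(\P^{(\sigma)})\subseteq\mathrm{colspace}(\P)$ by reinterpreting the shifted Hankel block columnwise --- is the crux of the argument and is done cleanly. One minor remark: since $\P$ is an infinite-by-$r$ matrix, writing $\P\pinv$ is slightly informal; it suffices (and is what you effectively use) that $\P$ has full column rank, so admits a left inverse, and the equation $\P\A^\sigma=\P^{(\sigma)}$ determines $\A^\sigma$ uniquely. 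Everything else goes through as written.
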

The vv-WFA learning algorithm leverages the fact that the proof of this theorem is constructive: one can recover a vv-WFA computing $f$
from any low rank factorization of $\tenmat{H}{1}$. In practice, a finite sub-block $\Hten_{\Pcal,\Scal} \in \R^{\Pcal\times \Scal\times p}$ of the Hankel tensor  is used
to recover the vv-WFA, where $\Pcal,\Scal\subset\Sigma^*$ are finite sets of prefixes and suffixes forming a \emph{complete basis} for $f$, \ie such that
$\rank(\tenmatpar{\Hten_{\Pcal,\Scal}}{1}) = \rank(\tenmat{H}{1})$. More details can be found 
in~\cite{rabusseau2017multitask}.

\section{A Fundamental Relation between WFAs and Linear 2-RNNs}\label{sec:WFAs.and.2RNNs}

We start by unraveling a fundamental connection between vv-WFAs and linear 2-RNNs: vv-WFAs and
linear 2-RNNs are expressively equivalent for representing functions defined over sequences of
discrete symbols. \rev{Moreover, both models have the same capacity in the sense that there is a direct
correspondence between the hidden units of a linear 2-RNN and the states of a vv-WFA computing the same function}. More formally, we have the following theorem. 
\begin{theorem}\label{thm:2RNN-vvWFA}
Any function that can be computed by a vv-WFA with $n$ states can be computed by a linear 2-RNN with $n$ hidden units.
Conversely, any function that can be computed by a linear 2-RNN with $n$ hidden units on sequences of one-hot vectors~(\ie canonical basis 
vectors) can be computed by a WFA with $n$ states.

More precisely, the WFA $A=\vvwa$ with $n$ states and the linear 2-RNN $M=(\szero,\Aten,\vvsinf)$ with
$n$ hidden units, where $\Aten\in\R^{n\times \Sigma \times n}$ is defined by $\Aten_{:,\sigma,:}=\A^\sigma$ for all $\sigma\in\Sigma$, are
such that
$f_A(\sigma_1\sigma_2\cdots\sigma_k) = f_M(\x_1,\x_2,\cdots,\x_k)$ for all sequences of input symbols $\sigma_1,\cdots,\sigma_k\in\Sigma$,
where for each $i\in[k]$ the input vector $\x_i\in\R^\Sigma$ is
the one-hot encoding of the symbol $\sigma_i$.

\end{theorem}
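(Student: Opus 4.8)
The plan is to prove both directions at once by showing that, for the pairing between $A=\vvwa$ and $M=(\szero,\Aten,\vvsinf)$ described in the statement, the hidden-state recursion of the linear 2-RNN on a one-hot encoded sequence exactly reproduces the iterated matrix products computed by the vv-WFA. The crux is a single observation about one-hot inputs: since $\Aten_{:,\sigma,:}=\A^\sigma$ and $\x_i=\e_{\sigma_i}$ is a canonical basis vector, the multilinear contraction of $\Aten$ against $\x_i$ reduces to selecting the corresponding frontal slice, $\Aten_{:,\sigma_i,:}=\A^{\sigma_i}$. Hence, on one-hot inputs and with $z_1,z_2$ equal to the identity, the tensor recurrence of Equation~\eqref{eq:2RNN.definition} collapses to a linear recurrence in which the state is multiplied by the transition matrix $\A^{\sigma_t}$ (up to a transpose fixed by which state mode is contracted with $\h_{t-1}$), i.e. $\h_t=(\A^{\sigma_t})^\top\h_{t-1}$.

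First I would establish the forward direction (vv-WFA $\Rightarrow$ linear 2-RNN). Fixing the pairing in the statement, I would prove by induction on $t$ that after reading $\e_{\sigma_1},\dots,\e_{\sigma_t}$ the state of $M$ satisfies $\h_t=(\A^{\sigma_1}\A^{\sigma_2}\cdots\A^{\sigma_t})^\top\szero$. The base case $t=0$ is $\h_0=\szero$, and the inductive step combines the slice-selection identity above with the transpose rule $(\A^{\sigma}\A^{\sigma'})^\top=(\A^{\sigma'})^\top(\A^{\sigma})^\top$. Applying the output map then yields $\y_k=\vvsinf\h_k=\vvsinf(\A^{\sigma_1}\cdots\A^{\sigma_k})^\top\szero$, which is exactly $f_A(\sigma_1\cdots\sigma_k)$ and, by definition of $f_M$ as the final output, equal to $f_M(\x_1,\dots,\x_k)$. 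This shows every $n$-state vv-WFA is computed by an $n$-hidden-unit linear 2-RNN.

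For the converse (linear 2-RNN $\Rightarrow$ WFA on one-hot inputs), I would run the correspondence in reverse: given a linear 2-RNN $M=(\h_0,\Aten,\vvsinf)$ with input dimension $|\Sigma|$, define a $p$-dimensional vv-WFA with $n$ states by $\szero=\h_0$, $\A^\sigma=\Aten_{:,\sigma,:}$ for each $\sigma\in\Sigma$, and final-weight matrix $\vvsinf$. Restricting $M$ to sequences of one-hot vectors, the very same induction gives $f_M(\e_{\sigma_1},\dots,\e_{\sigma_k})=f_A(\sigma_1\cdots\sigma_k)$. The point worth stressing is that on one-hot inputs the 2-RNN never probes $\Aten$ outside its frontal slices, so no information beyond $\{\A^\sigma\}_{\sigma\in\Sigma}$ is used and the reduction is exact; this is precisely why the statement restricts the converse to canonical basis vectors.

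No step here is genuinely deep: the whole theorem reduces to the slice-selection identity together with a one-line induction. The only place requiring care — and the main (minor) obstacle — is the bookkeeping of the tensor-contraction conventions, so that the transpose and the left-to-right ordering of the accumulated product match between the two models; concretely, $\h_{t-1}$ must be contracted against the state mode that produces $(\A^{\sigma_t})^\top$ rather than $\A^{\sigma_t}$, ensuring $\h_t=(\A^{\sigma_1}\cdots\A^{\sigma_t})^\top\szero$ agrees with the definition $f_A(x)=\vvsinf(\A^{x_1}\cdots\A^{x_k})^\top\szero$. Since the two constructions preserve the number of states/hidden units and are mutually inverse on one-hot inputs, the equivalence is in fact a size-preserving correspondence, as asserted in the surrounding discussion.
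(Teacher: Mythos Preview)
Your proposal is correct and follows essentially the same approach as the paper: an induction on the sequence length showing $\h_t=(\A^{\sigma_1}\cdots\A^{\sigma_t})^\top\szero$, based on the slice-selection identity $\Aten\ttv{2}\e_{\sigma_i}=\A^{\sigma_i}$, followed by applying the output map $\vvsinf$. Your explicit treatment of the converse direction and the remark about the transpose/mode-contraction bookkeeping are slightly more detailed than the paper's version, but the argument is the same.
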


This result first implies that linear 2-RNNs defined over sequence of discrete symbols~(using one-hot encoding) \emph{can be provably learned using the spectral
learning algorithm for WFAs/vv-WFAs}; indeed, these  algorithms have been proved to return consistent estimators.
\rev{Let us stress again that, contrary to the case of feed-forward architectures, learning recurrent networks with linear activation functions is not a trivial task.}
Furthermore, Theorem~\ref{thm:2RNN-vvWFA} reveals that linear 2-RNNs are a natural generalization of classical weighted automata  to functions
defined over sequences of continuous vectors~(instead of discrete symbols). This spontaneously raises the question of whether the spectral learning algorithms
for WFAs and vv-WFAs can be extended to the general setting of linear 2-RNNs; we show that the answer is in the positive in the next section.

\section{Spectral Learning of Linear 2-RNNs}\label{sec:Spectral.learning.of.2RNNs}
In this section, we extend the learning algorithm for vv-WFAs to linear 2-RNNs, \rev{thus at the same time addressing the limitation of the spectral learning algorithm to discrete inputs and  providing the first consistent learning algorithm for linear second-order RNNs.}


\subsection{Recovering 2-RNNs from Hankel Tensors}\label{subsec:SL-2RNN}
We first present an identifiability result showing how one can recover a linear 2-RNN computing a function $f:(\R^d)^*\to \R^p$ from observable tensors extracted from some Hankel tensor
associated with $f$. Intuitively, we obtain this result by reducing the problem to the one of learning a vv-WFA. This is done by considering the restriction of
$f$ to canonical basis vectors; loosely speaking, since the domain of this restricted function is isomorphic to $[d]^*$, this allows us to fall back onto the 
setting of sequences of discrete symbols.

Given a function $f:(\R^d)^*\to\R^p$, we define its Hankel tensor $\Hten_f\in \R^{[d]^* \times [d]^* \times p}$ by
$$(\Hten_f)_{i_1\cdots i_s,  j_1\cdots j_t,:} = f(\e_{i_1},\cdots,\e_{i_s},\e_{j_1},\cdots,\e_{j_t}),$$ 
for all $i_1,\cdots,i_s,j_1,\cdots,j_t\in [d]$, which is infinite in two
of its modes. It is easy to see that $\Hten_f$ is also the Hankel tensor associated with the function $\tilde{f}:[d]^* \to \R^p$ mapping any
sequence $i_1i_2\cdots i_k\in[d]^*$ to $f(\e_{i_1},\cdots,\e_{i_k})$. Moreover, in the special case where $f$ can be computed by a linear 2-RNN, one
can use the multilinearity of $f$ to show that $f(\x_1,\cdots,\x_k) = \sum_{i_1,\cdots,i_k = 1}^d (\x_1)_{i_1}\cdots(\x_l)_{i_k} \tilde{f}(i_1\cdots i_k)$,
\rev{giving us some intuition on how one could} learn $f$ by learning a vv-WFA computing $\tilde{f}$ using the spectral learning algorithm. 
That is, given a large enough sub-block $\Hten_{\Pcal,\Scal}\in \R^{\Pcal\times \Scal\times p}$ of $\Hten_f$ for some prefix and suffix sets $\Pcal,\Scal\subseteq [d]^*$, 
one should be able to recover a vv-WFA computing $\tilde{f}$ and consequently a linear 2-RNN computing $f$ using Theorem~\ref{thm:2RNN-vvWFA}. 
\rev{Before devoting the remaining of this section to formalize this intuition~(leading to Theorem~\ref{thm:2RNN-SL}), it is worth 
observing that while this approach is sound, it is not realistic since it requires observing entries of the Hankel tensor $\Hten_f$, which implies having access to input/output examples where the inputs are  \emph{sequences of canonical basis vectors}; This issue will be discussed in more details and addressed in the 
next section.} 

\emph{For the sake of clarity, we present the learning algorithm for the particular case where there exists an $L$ such that 
the prefix and suffix sets consisting of all sequences of length $L$, that is $\Pcal = \Scal = 
[d]^L$, forms a complete basis for $\tilde{f}$}~\rev{(\ie the sub-block $\Hten_{\Pcal,\Scal}\in\R^{[d]^L\times [d]^L\times p}$ of the Hankel tensor $\Hten_f$ is
such that $\rank(\tenmatpar{\Hten_{\Pcal,\Scal}}{1}) = \rank(\tenmatpar{\Hten_f}{1})$)}. This assumption allows us to present all the key elements of the algorithm  in a simpler way, the technical details
needed to lift this assumption are given in the supplementary material.

For any integer $l$, we define the finite tensor 
$\Hten^{(l)}_f\in \R^{ d\times \cdots \times d\times p}$ of order $l+1$ by
$$  (\Hten^{(l)}_f)_{i_1,\cdots,i_l,:} = f(\e_{i_1},\cdots,\e_{i_l}) \ \ \ \text{for all } i_1,\cdots,i_l\in [d].$$ 
Observe that for any integer $l$, the tensor $\Hten^{(l)}_f$ can be obtained by reshaping a finite sub-block of the Hankel tensor $\Hten_f$. 
When $f$ is computed by a linear $2$-RNN, we have the useful property that, for any integer $l$,
\begin{equation}\label{eq:}
 f(\x_1,\cdots,\x_l)  = \Hten^{(l)}_f \ttv{1} \x_1 \ttv{2} \cdots \ttv{l} \x_l
\end{equation}
for any sequence of inputs $\x_1,\cdots,\x_l\in\R^d$~(which can be shown using the
multilinearity of $f$). Another fundamental property of the tensors $\Hten^{(l)}_f$ is that they are of low tensor train rank. Indeed, for any $l$, one can check that
$\Hten^{(l)}_f = \TT{\Aten\ttv{1}\szero, \underbrace{\Aten, \cdots, \Aten}_{l-1\text{ times}}, \vvsinf^\top}$~(the tensor network representation of this
decomposition is shown in Figure~\ref{fig:Hl.TT.rank}).
This property will be particularly relevant to the learning algorithm we design in the following section, but it is also a fundamental relation
that deserves some attention on its own: it implies in particular that, beyond the classical relation between the rank of the Hankel matrix $\H_f$ and the number states of
a minimal WFA computing $f$, the Hankel matrix possesses a deeper structure intrinsically connecting weighted automata to the tensor
train decomposition. 
\rev{We now state the main result of this section, showing that a (minimal) linear 2-RNN computing a function $f$ can be exactly recovered from sub-blocks of the Hankel tensor $\Hten_f$.}

\begin{figure}
\begin{center}
\resizebox{0.45\textwidth}{0.08\textwidth}{%
\begin{tikzpicture}
	\input{tikz_tensor_networks}
	\node[tensor](G1){$\szero$};
	\node[draw=none,left of = G1](H){$\Hten^{(4)}_f=\ \ \ $};
	\node[tensor,right = 1cm of G1](G2){$\ten{A}$};
	\node[draw=none,below=0.8cm of G2](G22){};
	
	\node[tensor,right = 1cm of G2](G3){$\ten{A}$};
	\node[draw=none,below=0.8cm of G3](G32){};

	\node[tensor,right = 1cm of G3](G4){$\ten{A}$};
	\node[draw=none,below=0.8cm of G4](G42){};	
	
	\node[tensor,right = 1cm of G4](G4-){$\ten{A}$};
	\node[draw=none,below=0.8cm of G4-](G42-){};	
	
	\node[tensor,right = 1cm of G4-](G5){$\vvsinf$};
	\node[draw=none,below=0.8cm of G5](G51){};

	\edgeports{G2}{1}{below left = 0.01cm }{G1}{1}{below right}{$n$};	
	\edgeports{G2}{2}{below left = -0.1cm and 0.01cm }{G22}{}{}{$d$};
	
	\edgeports{G3}{1}{below left}{G2}{3}{below right}{$n$};	
	\edgeports{G3}{2}{below left = -0.1cm and 0.01cm }{G32}{}{}{$d$};
	
	\edgeports{G4}{1}{below left}{G3}{3}{below right}{$n$};	
	\edgeports{G4}{2}{below left = -0.1cm and 0.01cm }{G42}{}{}{$d$};
	
	\edgeports{G4-}{1}{below left}{G4}{3}{below right}{$n$};	
	\edgeports{G4-}{2}{below left = -0.1cm and 0.01cm }{G42-}{}{}{$d$};
	
	\edgeports{G5}{1}{below left}{G4-}{3}{below right}{$n$};	
	\edgeports{G5}{2}{below left = -0.1cm and 0.01cm }{G51}{}{}{$p$};

\end{tikzpicture}
}%
\end{center}
\caption{Tensor network representation of the TT decomposition of the Hankel tensor $\Hten^{(4)}_f$ induced
by a linear $2$-RNN $(\szero,\Aten,\vvsinf)$.}
\label{fig:Hl.TT.rank}
\end{figure}

\begin{theorem}\label{thm:2RNN-SL}
Let $f:(\R^d)^*\to \R^p$ be a function computed by a minimal linear $2$-RNN  with $n$ hidden units and let
$L$ be an integer such that $\rank(\tenmatgen{\Hten^{(2L)}_f}{L,L+1}) = n$.

Then, for any $\P\in\R^{d^L\times n}$ and $\S\in\R^{n\times d^Lp}$ such that $\tenmatgen{\Hten^{(2L)}_f}{L,L+1} = \P\S$, the
linear 2-RNN $M=(\szero,\Aten,\vvsinf)$ defined by

$$\szero = (\S\pinv)^\top\tenmatgen{\Hten^{(L)}_f}{L+1}, \ \ \ \ \vvsinf^\top = \P\pinv\tenmatgen{\Hten^{(L)}_f}{L,1}
$$
$$\Aten = (\tenmatgen{\Hten^{(2L+1)}_f}{L,1,L+1})\ttm{1}\P\pinv\ttm{3}(\S\pinv)^\top$$
is a minimal linear $2$-RNN computing $f$.
\end{theorem}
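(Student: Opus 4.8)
The plan is to follow the classical spectral-learning reasoning: show that each Hankel object appearing in the recovery formulas factors \emph{coherently} through the $n$-dimensional state space of the given minimal $2$-RNN, argue that any rank-$n$ factorization of $\tenmatgen{\Hten^{(2L)}_f}{L,L+1}$ differs from this canonical one by an invertible change of basis, and finally invoke the change-of-basis invariance of linear $2$-RNNs recalled in Section~\ref{sec:prelim}. Throughout I write $\A^i\eqdef\Aten_{:,i,:}\in\R^{n\times n}$ and, to avoid clashing with the given minimal $2$-RNN $(\szero,\Aten,\vvsinf)$, I denote the parameters produced by the formulas in the statement by $\hat\szero,\hat\Aten,\hat\vvsinf$.

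First I would read off two canonical factors from the tensor-train form $\Hten^{(l)}_f=\TT{\Aten\ttv{1}\szero,\Aten,\cdots,\Aten,\vvsinf^\top}$. Let $\P_\star\in\R^{d^L\times n}$ be the matricization of the first $L$ cores, whose $(i_1,\cdots,i_L)$-th row is $\szero^\top\A^{i_1}\cdots\A^{i_L}$, and let $\S_\star\in\R^{n\times d^Lp}$ be the matricization of the remaining cores, whose column indexed by $(j_1,\cdots,j_L)$ and output coordinate is the corresponding entry of $\A^{j_1}\cdots\A^{j_L}\vvsinf^\top$. Reading off the Hankel entries $\szero^\top\A^{i_1}\cdots\A^{i_L}\A^{j_1}\cdots\A^{j_L}\vvsinf^\top$ then gives $\tenmatgen{\Hten^{(2L)}_f}{L,L+1}=\P_\star\S_\star$, and the \emph{same} pair of factors yields $\tenmatgen{\Hten^{(L)}_f}{L+1}=\S_\star^\top\szero$, $\tenmatgen{\Hten^{(L)}_f}{L,1}=\P_\star\vvsinf^\top$, and $\tenmatgen{\Hten^{(2L+1)}_f}{L,1,L+1}=\Aten\ttm{1}\P_\star\ttm{3}\S_\star^\top$ (the middle core supplying the extra $\A^m$ slice). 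Since $\rank(\tenmatgen{\Hten^{(2L)}_f}{L,L+1})=n$ while $\P_\star$ has $n$ columns and $\S_\star$ has $n$ rows, both factors have full rank $n$.

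Next I would establish uniqueness of the factorization. For any $\P,\S$ of the stated shapes with $\P\S=\P_\star\S_\star=\tenmatgen{\Hten^{(2L)}_f}{L,L+1}$, equality of ranks forces $\Im(\P)=\Im(\P_\star)$ (both equal the column space of the rank-$n$ block), so there is a unique invertible $\M\in\R^{n\times n}$ with $\P=\P_\star\M$; left-cancelling $\P_\star$ then gives $\S=\M\inv\S_\star$. Full-rankness yields the one-sided identities $\P_\star\pinv\P_\star=\I_n=\S_\star\S_\star\pinv$ together with $\P\pinv=\M\inv\P_\star\pinv$ and $\S\pinv=\S_\star\pinv\M$, hence $\P\pinv\P_\star=\M\inv$ and $\S_\star\S\pinv=\M$. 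Substituting these into the definitions in the statement and simplifying with the rule $(\T\ttm{n}\X)\ttm{n}\Y=\T\ttm{n}\Y\X$ gives $\hat\szero=\M^\top\szero$, $\hat\vvsinf^\top=\M\inv\vvsinf^\top$, and $\hat\Aten=\Aten\ttm{1}\M\inv\ttm{3}\M^\top$. Setting $\Q=\M\inv$, these are precisely the parameters of the given $2$-RNN transformed by the change of basis $\Q$ recalled in Section~\ref{sec:prelim}, so by the invariance of linear $2$-RNNs under change of basis the recovered model computes $f$; minimality is immediate since it has $n$ hidden units and $f$ cannot be computed with fewer.

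The main obstacle is the bookkeeping of the second paragraph: correctly reshaping the several Hankel tensors into their matricizations and verifying that one and the \emph{same} pair $(\P_\star,\S_\star)$ factors all of them coherently (this coherence is what makes the isolated formulas for $\hat\szero$, $\hat\vvsinf$ and $\hat\Aten$ combine into a single consistent change of basis). Once this is in place, the remainder is the standard rank-factorization argument; a secondary point requiring care is justifying the pseudoinverse identities for the rectangular full-rank factors $\P$ and $\S$.
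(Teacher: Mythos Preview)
Your proposal is correct and follows essentially the same route as the paper's proof: both define canonical factors $\P_\star,\S_\star$ from the TT decomposition of the Hankel tensors, verify the four coherent factorizations, relate an arbitrary rank-$n$ factorization $(\P,\S)$ to $(\P_\star,\S_\star)$ via an invertible $\M$, and conclude by the change-of-basis invariance of linear 2-RNNs. The only cosmetic difference is that the paper defines $\M=\P\pinv\P_\star$ directly and then checks $\M^{-1}=\S_\star\S\pinv$, whereas you first argue via equality of column spaces that $\P=\P_\star\M$ for some invertible $\M$; your $\M$ is the inverse of the paper's, but the resulting computations are identical.
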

First observe that such an integer $L$ exists under the assumption that $\Pcal = \Scal = 
[d]^L$ forms a complete basis for $\tilde{f}$.
It is also worth mentioning that a necessary condition for $\rank(\tenmatgen{\Hten^{(2L)}_f}{L,L+1}) = n$ is that
$d^L\geq n$, \ie $L$ must be of the order $\log_d(n)$.

\subsection{Hankel Tensors Recovery from Linear Measurements}\label{subsec:Hankel.tensor.recovery}

We showed in the previous section that, given the Hankel tensors $\Hten^{(L)}_f$, $\Hten^{(2L)}_f$ and $\Hten^{(2L+1)}_f$, one can recover 
a linear 2-RNN computing $f$ if it exists. This first implies that the class of functions that can be computed by linear 2-RNNs is learnable  in Angluin's
exact learning model~\cite{angluin1988queries} where one has access to an oracle that can answer membership queries~(\eg \textit{what is the value computed by the target $f$ 
on~$(\x_1,\cdots,\x_k)$?}) and  equivalence queries~(\eg \textit{is my current hypothesis $h$ equal to the target $f$?}). While this fundamental result is 
of significant theoretical interest, assuming access to such an oracle is unrealistic. In this section, we show that a stronger learnability result can be obtained in a more realistic setting,
where we  only
assume access to randomly generated input/output examples $((\x^{(i)}_1,\x_2^{(i)},\cdots,\x_l^{(i)}),\y^{(i)})\in(\R^d)^*\times\R^p$ where $\y^{(i)} = f(\x^{(i)}_1,\x_2^{(i)},\cdots,\x_l^{(i)})$.

The key observation is that such an input/output example $((\x^{(i)}_1,\x_2^{(i)},\cdots,\x_l^{(i)}),\y^{(i)})$ can be seen as a \emph{linear measurement} of the 
Hankel tensor $\Hten^{(l)}$. Indeed, we have
\begin{align*}
\y^{(i)} &= f(\x^{(i)}_1,\x_2^{(i)},\cdots,\x_l^{(i)}) = \Hten^{(l)}_f \ttv{1} \x_1 \ttv{2} \cdots \ttv{l} \x_l \\
&= 
\tenmatgen{\Hten^{(l)}}{l,1}^\top \x^{(i)}
\end{align*}
where $\x^{(i)} = \x^{(i)}_1\kron\cdots \kron \x_{l}^{(i)}\in\R^{d^l}$. Hence,  by regrouping $N$ output examples $\y^{(i)}$ into
the matrix $\Ymat\in\R^{N\times p}$ and the corresponding input vectors $\x^{(i)}$ into the matrix $\X\in\R^{N\times d^l}$,
one can recover $\Hten^{(l)}$ by solving the linear system $\Ymat = \X\tenmatgen{\Hten^{(l)}}{l,1}$, which has a unique
solution whenever $\X$ is of full column rank. 
This  naturally leads to the following theorem, whose proof relies on the fact that
$\X$ will be of full column rank whenever $N\geq d^l$ and  the components of each $\x^{(i)}_j$ for $j\in[l],i\in[N]$
are drawn independently from a continuous distribution over $\R^{d}$~(w.r.t. the Lebesgue measure).

%
%
%


\begin{theorem}\label{thm:learning-2RNN}
Let $(\h_0,\Aten,\vvsinf)$ be a minimal linear 2-RNN with $n$ hidden units computing a function $f:(\R^d)^*\to \R^p$, and let $L$ be an integer\footnote{Note that the theorem can be adapted if such an integer $L$ does not exists~(see supplementary material).}
such that $\rank(\tenmatgen{\Hten^{(2L)}_f}{L,L+1}) = n$.
Suppose we have access to $3$ datasets
$D_l = \{((\x^{(i)}_1,\x_2^{(i)},\cdots,\x_l^{(i)}),\y^{(i)}) \}_{i=1}^{N_l}\subset(\R^d)^l\times \R^p$ for $l\in\{L,2L,2L+1\}$ 
where the entries of each $\x^{(i)}_j$ are drawn independently from the standard normal distribution and where each
$\y^{(i)} = f(\x^{(i)}_1,\x_2^{(i)},\cdots,\x_l^{(i)})$.

Then, if $N_l \geq d^l$ for $l =L,\ 2L,\ 2L+1$,
the linear 2-RNN $M$ returned by Algorithm~\ref{alg:2RNN-SL} with the least-squares method satisfies $f_M = f$ with probability one.
\end{theorem}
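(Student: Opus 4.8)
The plan is to reduce the statement to the \emph{exact} recovery of the three Hankel tensors $\Hten^{(L)}_f$, $\Hten^{(2L)}_f$ and $\Hten^{(2L+1)}_f$ from the datasets $D_L$, $D_{2L}$ and $D_{2L+1}$, after which Theorem~\ref{thm:2RNN-SL} immediately yields a minimal linear 2-RNN $M$ with $f_M=f$. The whole argument therefore hinges on showing that, for each $l\in\{L,2L,2L+1\}$, the least-squares estimate returned by the algorithm coincides exactly with $\tenmatgen{\Hten^{(l)}_f}{l,1}$ with probability one.

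First I would recall the linear-measurement reformulation established just above the theorem: stacking the $N_l$ examples of $D_l$ row-wise gives $\Ymat=\X\tenmatgen{\Hten^{(l)}_f}{l,1}$, where the $i$th row of $\X\in\R^{N_l\times d^l}$ is the Kronecker product $\x^{(i)}=\x_1^{(i)}\kron\cdots\kron\x_l^{(i)}$. Since each output is generated exactly by $f$, this linear system is consistent; hence whenever $\X$ has full column rank $d^l$, the least-squares solution is unique and equals $\X\pinv\Ymat=\X\pinv\X\,\tenmatgen{\Hten^{(l)}_f}{l,1}=\tenmatgen{\Hten^{(l)}_f}{l,1}$, giving exact recovery of the Hankel tensor.

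The hard part is thus to prove that $\X$ has full column rank with probability one as soon as $N_l\geq d^l$, and here I would use a measure-zero argument. It suffices to exhibit a single invertible $d^l\times d^l$ submatrix of $\X$; I would take the square submatrix $\X_0$ formed by the first $d^l$ rows, whose determinant is a polynomial in the $N_l\cdot l\cdot d$ Gaussian entries of the input vectors. This polynomial is not identically zero: choosing each $\x_j^{(i)}$ to be a canonical basis vector so that the rows of $\X_0$ run through all distinct vectors $\e_{k_1}\kron\cdots\kron\e_{k_l}$ with $(k_1,\dots,k_l)\in[d]^l$ turns $\X_0$ into a permutation matrix, whose determinant is $\pm1$. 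Since a nonzero polynomial vanishes only on a set of Lebesgue measure zero and the standard Gaussian law is absolutely continuous with respect to Lebesgue measure, the event $\{\det(\X_0)=0\}$ has probability zero; on its complement $\X_0$ is invertible and $\X$ has full column rank.

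Finally I would combine the pieces. The bad events for $l\in\{L,2L,2L+1\}$ are three measure-zero sets, so with probability one all three Hankel tensors are recovered exactly; the hypothesis $\rank(\tenmatgen{\Hten^{(2L)}_f}{L,L+1})=n$ is precisely what Theorem~\ref{thm:2RNN-SL} requires, so the 2-RNN the algorithm constructs from the recovered tensors satisfies $f_M=f$. The only genuinely delicate point is the full-rank claim, and the crux there is recognizing that the Kronecker products of canonical basis vectors are themselves the canonical basis of $\R^{d^l}$, which makes exhibiting one good configuration trivial and drives the whole probability-one conclusion.
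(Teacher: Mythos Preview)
Your proposal is correct and follows essentially the same approach as the paper: reduce to exact recovery of each $\Hten^{(l)}_f$ via the linear system $\Ymat=\X\tenmatgen{\Hten^{(l)}_f}{l,1}$, argue that $\X$ has full column rank with probability one by exhibiting a nonzero polynomial (the paper uses $\det(\X^\top\X)$, you use the determinant of a $d^l\times d^l$ submatrix) whose nonvanishing is witnessed by a configuration in which the Kronecker products span $\R^{d^l}$, and then invoke Theorem~\ref{thm:2RNN-SL}. Your witness via canonical basis vectors making $\X_0$ a permutation matrix is in fact a bit more explicit than the paper's, but the argument is the same.
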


\begin{algorithm}[tb]
   \caption{\texttt{2RNN-SL}: Spectral Learning of linear 2-RNNs }
   \label{alg:2RNN-SL}
\begin{algorithmic}[1]
   \REQUIRE Three training datasets $D_L,D_{2L},D_{2L+1}$ with input sequences of length $L$, $2L$ and $2L+1$ respectively, a \texttt{recovery\_method}, rank $R$ and learning rate $\gamma$~(for IHT/TIHT).
   \FOR{$l\in\{L,2L,2L+1\}$}
   \STATE\label{alg.firstline.forloop} Use $D_l = \{((\x^{(i)}_1,\x_2^{(i)},\cdots,\x_l^{(i)}),\y^{(i)}) \}_{i=1}^{N_l}\subset(\R^d)^l\times \R^p$ to build $\X\in\R^{N_l\times d^l}$ with rows $\x^{(i)}_1\kron\x_2^{(i)}\kron\cdots\kron\x_l^{(i)}$ for $i\in[N_l]$ and  $\Ymat\in\R^{N_l\times p}$ with rows $\y^{(i)}$ for $i\in[N_l]$.
   \IF{\texttt{recovery\_method} = "Least-Squares"}
   \STATE\label{alg.line.lst-sq} $\Hten^{(l)} = \displaystyle\argmin_{\T\in \R^{d\times\cdots\times d\times p}} \norm{\X\tenmatgen{\T}{l,1} - \Ymat}_F^2$.
   \ELSIF{\texttt{recovery\_method} = "Nuclear Norm"}
   \STATE $\Hten^{(l)} = \displaystyle\argmin_{\T\in \R^{d\times\cdots\times d\times p}} \norm{\tenmatgen{\T}{\ceil{l/2},l-\ceil{l/2} + 1}}_*$ subject to $\X \tenmatgen{\T}{l,1} = \Ymat$.
   \label{alg.line.nucnorm}
   \ELSIF{\texttt{recovery\_method} = "(T)IHT"}
   \STATE Initialize $\Hten^{(l)} \in \R^{d\times\cdots\times d\times p}$ to $\mat{0}$.
   \REPEAT\label{alg.line.iht.start}
   \STATE\label{alg.line.iht.gradient} $\tenmatgen{\Hten^{(l)}}{l,1} = \tenmatgen{\Hten^{(l)} }{l,1} + \gamma\X^\top(\Ymat - \X\tenmatgen{\Hten^{(l)} }{l,1})$
   \STATE $\Hten^{(l)} = \texttt{project}(\Hten^{(l)},R)$~(using either SVD for IHT or TT-SVD for TIHT)
   \UNTIL{convergence}\label{alg.line.iht.end}
   \ENDIF\label{alg.lastline.forloop} 
   \ENDFOR
   \STATE\label{alg.line.svd} Let $\tenmatgen{\Hten^{(2L)}}{L,L+1} = \P\S$ be a rank $R$ factorization.
   \STATE Return the linear 2-RNN $(\h_0,\Aten,\vvsinf)$ where 
   \begin{align*}
    \szero\ &= (\S\pinv)^\top\tenmatgen{\Hten^{(L)}_f}{L+1},\ \ \ \ \vvsinf^\top = \P\pinv\tenmatgen{\Hten^{(L)}_f}{L,1}\\
    \Aten\ &= (\tenmatgen{\Hten^{(2L+1)}_f}{L,1,L+1})\ttm{1}\P\pinv\ttm{3}(\S\pinv)^\top
\end{align*}  
\end{algorithmic}
\end{algorithm}

A few remarks on this theorem are in order.   The first observation is that the $3$
datasets $D_L$, $D_{2L}$ and $D_{2L+1}$ can either be drawn independently or not~(\eg the sequences in $D_{L}$ can
be prefixes of the sequences in $D_{2L}$ but it is not necessary). In particular, the result still holds when  the datasets $D_l$ are constructed from a unique dataset 
$S =\{((\x^{(i)}_1,\x_2^{(i)},\cdots,\x_T^{(i)}),(\y^{(i)}_1,\y^{(i)}_2,\cdots,\y^{(i)}_T)) \}_{i=1}^{N}$
of input/output sequences with $T\geq 2L+1$, where $\y^{(i)}_t = f(\x^{(i)}_1,\x_2^{(i)},\cdots,\x_t^{(i)})$ for any $t\in[T]$.
Observe that having access to such input/output training sequences is not an unrealistic assumption: for example when training RNNs for
language modeling the output $\y_t$ is the  conditional probability vector of the next symbol, and  for classification tasks the output is
the one-hot encoded label for all time steps. Lastly, when the outputs $\y^{(i)}$ are noisy, one can solve the least-squares problem
$\norm{\Ymat - \X\tenmatgen{\Hten^{(l)}}{l,1}}^2_F$ to approximate the Hankel tensors; we will empirically evaluate this approach 
in Section~\ref{sec:xp} and we defer its theoretical analysis  in the noisy setting to future work.

\subsection{\rev{Leveraging the low rank structure of the Hankel tensors}}
While the least-squares method is sufficient to obtain the theoretical guarantees of Theorem~\ref{thm:learning-2RNN}, it does not leverage
the low rank structure of the Hankel tensors $\Hten^{(L)}$, $\Hten^{(2L)}$ and $\Hten^{(2L+1)}$. We now  propose three alternative recovery
methods to leverage this structure, whose  sample efficiency will be assessed in a simulation study in Section~\ref{sec:xp}~(deriving improved sample
complexity guarantees using these methods is left for future work). In the noiseless setting, we first propose to replace solving the linear 
system $\Ymat = \X\tenmatgen{\Hten^{(l)}}{l,1}$ with a nuclear norm minimization problem~(see line~\ref{alg.line.nucnorm} of Algorithm~\ref{alg:2RNN-SL}), thus leveraging the
fact that $\tenmatgen{\Hten^{(l)}}{\ceil{l/2},l-\ceil{l/2} + 1}$ is potentially of low matrix rank. We also propose to use iterative hard thresholding~(IHT)~\cite{jain2010guaranteed}
and its tensor counterpart TIHT~\cite{rauhut2017low}, which are based on the classical projected gradient descent algorithm and have shown to be
robust to noise in practice. These two methods are implemented in lines~\ref{alg.line.iht.start}-\ref{alg.line.iht.end} of Algorithm~\ref{alg:2RNN-SL}. There,
the \texttt{project} method either projects $\tenmatgen{\Hten^{(l)}}{\ceil{l/2},l-\ceil{l/2} + 1}$ onto the manifold of low rank matrices 
using SVD~(IHT) or projects  $\Hten^{(l)}$ onto the manifold of tensors with TT-rank $R$~(TIHT).

\begin{figure*}[ht]
\vspace*{-0.4cm}
\begin{center}
\hspace*{-1cm}\includegraphics[width=0.9\textwidth]{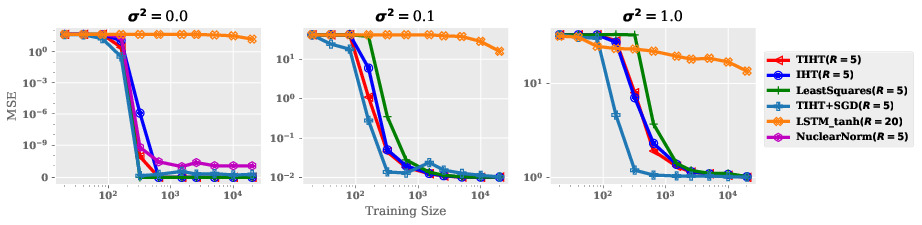}
\end{center}
\vspace*{-0.5cm}
\caption{ Average MSE as a function of the training set size for the first experiment~(learning a random linear 2-RNN) for different values of output noise.}
\label{fig:xp.random}
\end{figure*}

\begin{figure*}[ht]
\begin{center}
\vspace*{-0.25cm}
\hspace*{-1cm}\includegraphics[width=0.9\textwidth]{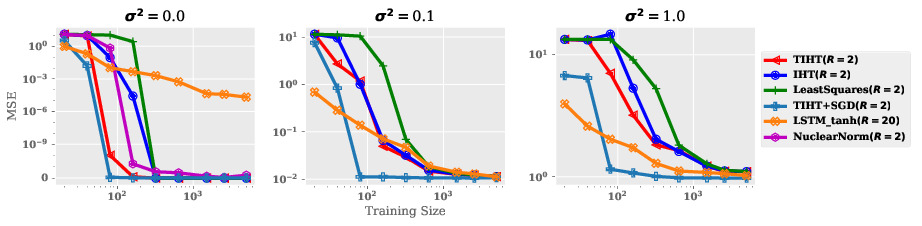}
\end{center}
\vspace*{-0.5cm}
\caption{ Average MSE as a function of the training set size for the second experiment~(learning a simple arithmetic function) for different values of output noise.}
\label{fig:xp.addition}
\end{figure*}

\rev{
The low rank structure of the Hankel tensors can also be leveraged to improve the scalability of the learning algorithm.
One can check that the computational complexity of Algorithm~\ref{alg:2RNN-SL} is exponential in the maximum sequence length: indeed,
building the matrix $\X$ in line~2 is already in $\bigo{N_ld^l}$, where $l$ is in turn equal to $L,\ 2L$ and $2L+1$.
Focusing on the TIHT recovery method, a careful analysis shows that the computational complexity of the algorithm
is in 
$$\bigo{d^{2L+1}\left(p(TN+R) +R^2\right) + TL\max(p,d)^{2L+3}}, $$
where $N=\max(N_L,N_{2L},N_{2L+1})$ and $T$ is the number of iterations of the loop on line~\ref{alg.line.iht.start}.
Thus, in its present form, our approach cannot scale to high dimensional inputs and long sequences. However, one can  leverage
the low tensor train rank structure of the Hankel tensors to circumvent this issue:
by storing 
both the estimates of the Hankel tensors $\Hten^{(l)}$ and the matrices $\X$ in TT format~(with decompositions of ranks $R$ and $N$ respectively),
all the operations needed to implement Algorithm~\ref{alg:2RNN-SL} with the TIHT recovery method can be performed in time $\bigo{T(N+R)^3(Ld + p)}$~(more details can be found in the supplementary
material). By leveraging the tensor train structure, one can thus lift the dependency on $d^{2L+1}$ by paying the price of an increased cubic complexity 
in the number of examples $N$ and the number of states $R$. While the
dependency on the number of states is not a major issue~($R$ should be negligible w.r.t. $N$), the dependency on $N^3$ can quickly become prohibitive for realistic application scenario. 
Fortunately, this issue can  be
dealt with by using mini-batches of training data for the gradient updates on line~\ref{alg.line.iht.gradient} instead of the whole dataset $D_l$, in which case the overall complexity
of Algorithm~\ref{alg:2RNN-SL} becomes $\bigo{T(M+R)^3(Ld + p)}$ where $M$ is the mini-batch size~(the overall algorithm in TT format is summarized in Algorithm~\ref{alg:2RNN-SL-TT} in the supplementary material).
}

\section{Experiments}\label{sec:xp}



\rev{
In this section, we perform experiments\footnote{\url{https://github.com/litianyu1993/learning_2RNN}} on two toy examples to compare how the choice of the recovery method~(\texttt{LeastSquares}, \texttt{NuclearNorm}, \texttt{IHT} and \texttt{TIHT}) affects the sample efficiency of Algorithm~\ref{alg:2RNN-SL}. Additional experiments on real data can be found in Appendix~\ref{app:real}.
We also include comparisons with RNNs with long short term memory~(LSTM) units~\cite{hochreiter1997long} and report the performances
obtained by  refining  the solution returned by our algorithm~(with the TIHT recovery method) using stochastic gradient descent~(\texttt{TIHT+SGD}). 
}



We perform two experiments. In the first one, we randomly generate a linear 2-RNN with $5$ units computing a function $f:\R^3\to\R^2$ by 
drawing the entries of all parameters  $(\h_0,\Aten,\vvsinf)$  independently  from the normal distribution
$\Ncal(0,0.2)$.
The training data consists of $3$ independently drawn sets 
$D_l = \{((\x^{(i)}_1,\x_2^{(i)},\cdots,\x_l^{(i)}),\y^{(i)}) \}_{i=1}^{N_l}\subset(\R^d)^l\times \R^p$ for $l\in\{L,2L,2L+1\}$ with $L=2$,
where  each $\x^{(i)}_j\sim\Ncal(\vec{0},\I)$ and where the outputs can be noisy, \ie 
$\y^{(i)} = f(\x^{(i)}_1,\x_2^{(i)},\cdots,\x_l^{(i)}) + \vecs{\xi}^{(i)}$ where $\vecs{\xi}^{(i)}\sim\Ncal(0,\sigma^2)$ for some noise variance $\sigma^2$. 
In the second experiment, the goal is to learn a simple arithmetic function computing the sum of the running differences between the two components of a sequence
of $2$-dimensional vectors, \ie $f(\x_1,\cdots,\x_k) = \sum_{i=1}^k \v^\top\x_i$ where $\v^\top = (-1\ \ 1)$. The $3$ training datasets are generated 
using the same process as above and a constant entry equal to one is added to all the input vectors to encode a bias 
term~(one can check that the resulting function can be computed by a linear 2-RNN with $2$ hidden units). 

We run the experiments for different sizes of training data ranging from $N=20$ to $N=20,000$~(we set $N_L=N_{2L}=N_{2L+1}=N$) 
and we compare the different methods in terms of mean squared error~(MSE)
 on a test set of $1,000$ sequences of length $6$ generated in the same
way as the training data~(note that the training data only contains sequences of length up to $5$). 

\rev{We report the performances of (non-linear) RNNs with a single layer of LSTM with $20$  hidden units~(with $\mathrm{tanh}$ activation 
functions\footnote{We also tried training LSTMs with linear recurrent activation functions on the two tasks but they always performed worse than non-linear ones.}) and one fully-connected output layer, trained using
the Adam optimizer~\cite{kingma2014adam} with learning rate 0.001. We also use Adam with learning rate 0.001 to refine the models returned by TIHT with stochastic gradient
descent SGD (we tried directly training a linear 2-RNN from random initializations using SGD as well but this approach always failed to return a good model). 
The IHT/TIHT methods sometimes returned aberrant models~(due to numerical instabilities), we used the following scheme to circumvent this issue: 
when the training MSE of the hypothesis was greater than the one of the zero function,
the zero function was returned instead (we applied this scheme to all other methods in the experiments).}


The results are reported in Figure~\ref{fig:xp.random} and~\ref{fig:xp.addition} where we see that all recovery methods of Algorithm~\ref{alg:2RNN-SL} lead to consistent estimates of the
target function given enough training data. This is the case even in the presence of noise~(in which case more samples are needed to achieve the same accuracy, as expected).
We can also see that \texttt{IHT} and \texttt{TIHT} are overall more sample efficient than the other methods~(especially with noisy data), showing
that taking the low rank structure of the Hankel tensors into account is profitable. Moreover, \texttt{TIHT} tends to perform better than its matrix counterpart, 
confirming our intuition that leveraging the tensor train 
structure is beneficial. 
\rev{While LSTMs obtain good performances on the addition task, they struggle to recover the random linear 2-RNN in the first task~(despite our efforts at hyper-parameter tuning and architecture search). 
In the meantime, refining the TIHT models using SGD almost always leads to significant improvements~(especially under the noisy setting), matching or outperforming the
performances of RNNs on the two tasks.
Lastly, we show the effect of rank mis-specification in Figure~\ref{fig:xp.misrank}: as one can expect, when the rank parameter $R$ 
is over-estimated Algorithm~\ref{alg:2RNN-SL} still converges to the target function but it requires  more samples~(when the rank parameter was underestimated all  algorithms did not learn at all). 
}

\begin{figure}[t]
\begin{center}
\hspace*{0cm}\includegraphics[width=0.5\textwidth]{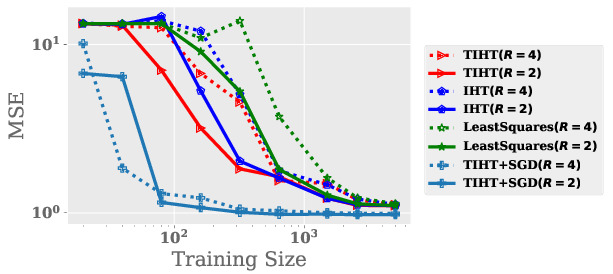}
\end{center}
\vspace*{-0.5cm}
\caption{ Comparison between different rank settings in terms of average MSE for the second experiment~(learning a simple arithmetic function) in the noisy
setting~($\sigma^2=1$).}
\label{fig:xp.misrank}
\end{figure}

\section{Conclusion and Future Directions}

We proposed the first provable learning algorithm for second-order RNNs with linear activation functions:
we showed that linear 2-RNNs are a natural extension of vv-WFAs to the setting of input sequences of \emph{continuous vectors}~(rather than
discrete symbol) and we extended the vv-WFA spectral learning
algorithm to this setting. We believe that the results presented in this paper open a number of exciting and promising research directions on both the
theoretical and practical perspectives. We first plan to use the spectral learning estimate as a starting point for
gradient based methods to train non-linear 2-RNNs. More precisely, linear 2-RNNs can be thought of as 2-RNNs using LeakyRelu activation functions with negative slope $1$, therefore one could use 
a linear 2-RNN as initialization before gradually reducing the negative slope parameter during training. The extension of the spectral method to linear 2-RNNs also
opens the door to scaling up the classical spectral algorithm to problems with large discrete alphabets~(which is a known caveat of the spectral algorithm for WFAs) since
it allows one to use low dimensional embeddings of large vocabularies~(using \eg word2vec or latent semantic analysis). From the theoretical perspective, we plan on 
deriving learning guarantees for  linear 2-RNNs in the noisy setting~(\eg using the PAC learnability framework). Even though it is intuitive that such guarantees should hold~(given
the continuity of all operations used in our algorithm), we believe that such an analysis may entail results of independent interest. In particular, analogously to the
matrix case  studied in~\cite{cai2015rop}, obtaining rate optimal convergence rates for the recovery of the low TT-rank Hankel tensors from rank one measurements is an interesting
direction; such a result could for example allow one to improve the generalization bounds provided in~\cite{balle2012spectral} for spectral learning of 
general WFAs. 

%
%
%

\newpage
\subsubsection*{Acknowledgements} This work was done while G. Rabusseau was an IVADO postdoctoral scholar at McGill University. 
{
\bibliographystyle{plain}
\bibliography{main.bib}

\begin{thebibliography}{10}

\bibitem{angluin1988queries}
Dana Angluin.
\newblock Queries and concept learning.
\newblock {\em Machine learning}, 2(4):319--342, 1988.

\bibitem{avcu2017subregular}
Enes Avcu, Chihiro Shibata, and Jeffrey Heinz.
\newblock Subregular complexity and deep learning.
\newblock {\em CLASP Papers in Computational Linguistics}, page~20, 2017.

\bibitem{ayache2018explaining}
St{\'e}phane Ayache, R{\'e}mi Eyraud, and No{\'e} Goudian.
\newblock Explaining black boxes on sequential data using weighted automata.
\newblock In {\em Proceedings of ICGI}, pages 81--103, 2018.

\bibitem{bailly2009grammatical}
Rapha{\"e}l Bailly, Fran{\c{c}}ois Denis, and Liva Ralaivola.
\newblock Grammatical inference as a principal component analysis problem.
\newblock In {\em Proceedings of {ICML}}, pages 33--40, 2009.

\bibitem{balle2014spectral}
Borja Balle, Xavier Carreras, Franco~M Luque, and Ariadna Quattoni.
\newblock Spectral learning of weighted automata.
\newblock {\em Machine learning}, 96(1-2):33--63, 2014.

\bibitem{balle2012spectral}
Borja Balle and Mehryar Mohri.
\newblock Spectral learning of general weighted automata via constrained matrix
  completion.
\newblock In {\em Proceedings of {NIPS}}, pages 2159--2167, 2012.

\bibitem{boots2011closing}
Byron Boots, Sajid~M. Siddiqi, and Geoffrey~J. Gordon.
\newblock Closing the learning-planning loop with predictive state
  representations.
\newblock {\em International Journal of Robotics Research}, 30(7):954--966,
  2011.

\bibitem{cai2015rop}
T~Tony Cai, Anru Zhang, et~al.
\newblock Rop: Matrix recovery via rank-one projections.
\newblock {\em The Annals of Statistics}, 43(1):102--138, 2015.

\bibitem{chen2018recurrent}
Yining Chen, Sorcha Gilroy, Andreas Maletti, Jonathan May, and Kevin Knight.
\newblock Recurrent neural networks as weighted language recognizers.
\newblock In {\em Proceedings of NAACL-HLT}, pages 2261--2271, 2018.

\bibitem{critch2013algebraic}
Andrew Critch.
\newblock {\em Algebraic geometry of hidden Markov and related models}.
\newblock PhD thesis, University of California, Berkeley, 2013.

\bibitem{critch2014algebraic}
Andrew Critch and Jason Morton.
\newblock Algebraic geometry of matrix product states.
\newblock {\em SIGMA}, 10(095):095, 2014.

\bibitem{denis2008rational}
Fran{\c{c}}ois Denis and Yann Esposito.
\newblock On rational stochastic languages.
\newblock {\em Fundamenta Informaticae}, 86(1, 2):41--77, 2008.

\bibitem{downey2017predictive}
Carlton Downey, Ahmed Hefny, Byron Boots, Geoffrey~J Gordon, and Boyue Li.
\newblock Predictive state recurrent neural networks.
\newblock In {\em Proceedings of {NIPS}}, pages 6055--6066, 2017.

\bibitem{federer2014geometric}
Herbert Federer.
\newblock {\em Geometric measure theory}.
\newblock Springer, 2014.

\bibitem{gers2000learning}
Felix~A Gers, J{\"u}rgen Schmidhuber, and Fred Cummins.
\newblock Learning to forget: Continual prediction with {LSTM}.
\newblock {\em Neural Computation}, 12(10):2451--2471, 2000.

\bibitem{giles1992learning}
C~Lee Giles, Clifford~B Miller, Dong Chen, Hsing-Hen Chen, Guo-Zheng Sun, and
  Yee-Chun Lee.
\newblock Learning and extracting finite state automata with second-order
  recurrent neural networks.
\newblock {\em Neural Computation}, 4(3):393--405, 1992.

\bibitem{giles1990higher}
C~Lee Giles, Guo-Zheng Sun, Hsing-Hen Chen, Yee-Chun Lee, and Dong Chen.
\newblock Higher order recurrent networks and grammatical inference.
\newblock In {\em Proceedings of {NIPS}}, pages 380--387, 1990.

\bibitem{glaude2016pac}
Hadrien Glaude and Olivier Pietquin.
\newblock {PAC} learning of probabilistic automaton based on the method of
  moments.
\newblock In {\em Proceedings of {ICML}}, pages 820--829, 2016.

\bibitem{graves2013speech}
Alex Graves, Abdel-rahman Mohamed, and Geoffrey Hinton.
\newblock Speech recognition with deep recurrent neural networks.
\newblock In {\em Proceedings of {ICASSP}}, pages 6645--6649. IEEE, 2013.

\bibitem{hillar2013most}
Christopher~J Hillar and Lek-Heng Lim.
\newblock Most tensor problems are np-hard.
\newblock {\em Journal of the ACM (JACM)}, 60(6):45, 2013.

\bibitem{hochreiter1997long}
Sepp Hochreiter and J{\"u}rgen Schmidhuber.
\newblock Long short-term memory.
\newblock {\em Neural computation}, 9(8):1735--1780, 1997.

\bibitem{hsu2009spectral}
Daniel~J. Hsu, Sham~M. Kakade, and Tong Zhang.
\newblock A spectral algorithm for learning hidden markov models.
\newblock In {\em Proceedings of {COLT}}, 2009.

\bibitem{jain2010guaranteed}
Prateek Jain, Raghu Meka, and Inderjit~S Dhillon.
\newblock Guaranteed rank minimization via singular value projection.
\newblock In {\em Proceedings of {NIPS}}, pages 937--945, 2010.

\bibitem{khrulkov2018expressive}
Valentin Khrulkov, Alexander Novikov, and Ivan Oseledets.
\newblock Expressive power of recurrent neural networks.
\newblock In {\em Proceedings of {ICLR}}, 2018.

\bibitem{kingma2014adam}
Diederik~P Kingma and Jimmy Ba.
\newblock Adam: A method for stochastic optimization.
\newblock {\em arXiv preprint arXiv:1412.6980}, 2014.

\bibitem{klus2018tensor}
Stefan Klus, Patrick Gel{\ss}, Sebastian Peitz, and Christof Sch{\"u}tte.
\newblock Tensor-based dynamic mode decomposition.
\newblock {\em Nonlinearity}, 31(7):3359, 2018.

\bibitem{Kolda09}
Tamara~G Kolda and Brett~W Bader.
\newblock Tensor decompositions and applications.
\newblock {\em SIAM review}, 51(3):455--500, 2009.

\bibitem{lee1986machine}
YC~Lee, Gary Doolen, HH~Chen, GZ~Sun, Tom Maxwell, HY~Lee, and C~Lee Giles.
\newblock Machine learning using a higher order correlation network.
\newblock {\em Physica D: Nonlinear Phenomena}, 22(1-3):276--306, 1986.

\bibitem{li2018nonlinear}
Tianyu Li, Guillaume Rabusseau, and Doina Precup.
\newblock Nonlinear weighted finite automata.
\newblock In {\em Proceedings of {AISTATS}}, pages 679--688, 2018.

\bibitem{lin2016short}
Qin Lin, Christian Hammerschmidt, Gaetano Pellegrino, and Sicco Verwer.
\newblock Short-term time series forecasting with regression automata.
\newblock 2016.

\bibitem{mikolov2011extensions}
Tom{\'a}{\oldv{s}} Mikolov, Stefan Kombrink, Luk{\'a}{\oldv{s}} Burget, Jan
  {\oldv{C}}ernock{\`y}, and Sanjeev Khudanpur.
\newblock Extensions of recurrent neural network language model.
\newblock In {\em Proceedings of {ICASSP}}, pages 5528--5531. IEEE, 2011.

\bibitem{omlin1996constructing}
Christian~W Omlin and C~Lee Giles.
\newblock Constructing deterministic finite-state automata in recurrent neural
  networks.
\newblock {\em Journal of the ACM (JACM)}, 43(6):937--972, 1996.

\bibitem{oseledets2011tensor}
Ivan~V Oseledets.
\newblock Tensor-train decomposition.
\newblock {\em SIAM Journal on Scientific Computing}, 33(5):2295--2317, 2011.

\bibitem{peng2018rational}
Hao Peng, Roy Schwartz, Sam Thomson, and Noah~A Smith.
\newblock Rational recurrences.
\newblock In {\em Proceedings of EMNLP}, pages 1203--1214, 2018.

\bibitem{pollack1991induction}
Jordan~B Pollack.
\newblock The induction of dynamical recognizers.
\newblock In {\em Connectionist Approaches to Language Learning}, pages
  123--148. Springer, 1991.

\bibitem{rabusseau2016thesis}
Guillaume Rabusseau.
\newblock {\em A Tensor Perspective on Weighted Automata, Low-Rank Regression
  and Algebraic Mixtures}.
\newblock PhD thesis, Aix-Marseille Université, 2016.

\bibitem{rabusseau2017multitask}
Guillaume Rabusseau, Borja Balle, and Joelle Pineau.
\newblock Multitask spectral learning of weighted automata.
\newblock In {\em Proceedings of {NIPS}}, pages 2585--2594, 2017.

\bibitem{rauhut2017low}
Holger Rauhut, Reinhold Schneider, and {\oldv{Z}}eljka Stojanac.
\newblock Low rank tensor recovery via iterative hard thresholding.
\newblock {\em Linear Algebra and its Applications}, 523:220--262, 2017.

\bibitem{recasens2013spectral}
Adria Recasens and Ariadna Quattoni.
\newblock Spectral learning of sequence taggers over continuous sequences.
\newblock In {\em Proceedings of {ECML}}, pages 289--304, 2013.

\bibitem{sedghi2016training}
Hanie Sedghi and Anima Anandkumar.
\newblock Training input-output recurrent neural networks through spectral
  methods.
\newblock {\em arXiv preprint arXiv:1603.00954}, 2016.

\bibitem{siegelmann1992computational}
Hava~T Siegelmann and Eduardo~D Sontag.
\newblock On the computational power of neural nets.
\newblock In {\em Proceedings of {COLT}}, pages 440--449. ACM, 1992.

\bibitem{sutskever2011generating}
Ilya Sutskever, James Martens, and Geoffrey~E Hinton.
\newblock Generating text with recurrent neural networks.
\newblock In {\em Proceedings of {ICML}}, pages 1017--1024, 2011.

\bibitem{thon2015links}
Michael Thon and Herbert Jaeger.
\newblock Links between multiplicity automata, observable operator models and
  predictive state representations: a unified learning framework.
\newblock {\em Journal of Machine Learning Research}, 16:103--147, 2015.

\bibitem{tjandra2017compressing}
Andros Tjandra, Sakriani Sakti, and Satoshi Nakamura.
\newblock Compressing recurrent neural network with tensor train.
\newblock In {\em Proceedings of {IJCNN}}, pages 4451--4458. IEEE, 2017.

\bibitem{weiss2018extracting}
Gail Weiss, Yoav Goldberg, and Eran Yahav.
\newblock Extracting automata from recurrent neural networks using queries and
  counterexamples.
\newblock In {\em Proceedings of {ICML}}, pages 5244--5253, 2018.

\bibitem{wu2016multiplicative}
Yuhuai Wu, Saizheng Zhang, Ying Zhang, Yoshua Bengio, and Ruslan~R
  Salakhutdinov.
\newblock On multiplicative integration with recurrent neural networks.
\newblock In {\em Proceedings of {NIPS}}, pages 2856--2864, 2016.

\bibitem{yang2017tensor}
Yinchong Yang, Denis Krompass, and Volker Tresp.
\newblock Tensor-train recurrent neural networks for video classification.
\newblock In {\em Proceedings of ICML}, pages 3891--3900, 2017.

\bibitem{yu2017long}
Rose Yu, Stephan Zheng, Anima Anandkumar, and Yisong Yue.
\newblock Long-term forecasting using tensor-train rnns.
\newblock {\em arXiv preprint arXiv:1711.00073}, 2017.

\end{thebibliography}
}

\clearpage
\newpage
\onecolumn

\appendix

\section*{\centering \LARGE Connecting Weighted Automata and Recurrent Neural Networks through Spectral Learning \\ \vspace*{0.3cm}(Supplementary Material)}

\section{Proofs}
\subsection{Proof of Theorem~\ref{thm:2RNN-vvWFA}}
\begin{theorem*}
Any function that can be computed by a vv-WFA with $n$ states can be computed by a linear 2-RNN with $n$ hidden units.
Conversely, any function that can be computed by a linear 2-RNN with $n$ hidden units on sequences of one-hot vectors~(\ie canonical basis 
vectors) can be computed by a WFA with $n$ states.

More precisely, the WFA $A=\vvwa$ with $n$ states and the linear 2-RNN $M=(\szero,\Aten,\vvsinf)$ with
$n$ hidden units, where $\Aten\in\R^{n\times \Sigma \times n}$ is defined by $\Aten_{:,\sigma,:}=\A^\sigma$ for all $\sigma\in\Sigma$, are
such that
$f_A(\sigma_1\sigma_2\cdots\sigma_k) = f_M(\x_1,\x_2,\cdots,\x_k)$ for all sequences of input symbols $\sigma_1,\cdots,\sigma_k\in\Sigma$,
where for each $i\in[k]$ the input vector $\x_i\in\R^\Sigma$ is
the one-hot encoding of the symbol $\sigma_i$.
\end{theorem*}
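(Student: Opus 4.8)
The plan is to prove both directions at once by establishing the single precise identity $f_A(\sigma_1\cdots\sigma_k) = f_M(\x_1,\cdots,\x_k)$ for the explicit parameter correspondence given in the statement, where each $\x_i = \e_{\sigma_i}$ is the one-hot encoding of $\sigma_i$. Since the vv-WFA $A = \vvwa$ and the linear 2-RNN $M = (\szero,\Aten,\vvsinf)$ share their initial vector $\szero$ and output matrix $\vvsinf$, and are linked by $\Aten_{:,\sigma,:} = \A^\sigma$, this correspondence is a bijection between $n$-state vv-WFAs and the restrictions to one-hot inputs of $n$-unit linear 2-RNNs. Hence, once the identity is proved, the forward inclusion (WFA $\Rightarrow$ 2-RNN), the converse (2-RNN on one-hot inputs $\Rightarrow$ WFA), and the matching of model sizes ($n$ states $\leftrightarrow$ $n$ hidden units) all follow immediately.

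The key step is a slice-selection observation: because $\x_t = \e_{\sigma_t}$ is a canonical basis vector, contracting the transition tensor $\Aten$ along its input mode with $\x_t$ simply extracts the corresponding slice, $\Aten \ttv{2} \e_{\sigma_t} = \Aten_{:,\sigma_t,:} = \A^{\sigma_t}$. Consequently the bilinear recurrence defining the 2-RNN collapses, on one-hot inputs, to the purely linear matrix--vector recurrence $\h_t = (\A^{\sigma_t})^{\top}\h_{t-1}$, which is exactly the elementary step underlying the WFA computation $f_A(x) = \vvsinf(\A^{x_1}\cdots\A^{x_k})^{\top}\szero$.

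With this in hand I would set $\h_0 = \szero$ and proceed by induction on the sequence length $k$ to show $\h_k = (\A^{\sigma_1}\A^{\sigma_2}\cdots\A^{\sigma_k})^{\top}\szero$: the base case is immediate, and the inductive step is a single application of the collapsed recurrence above. Applying the output matrix then gives $f_M(\x_1,\cdots,\x_k) = \vvsinf \h_k = \vvsinf(\A^{\sigma_1}\cdots\A^{\sigma_k})^{\top}\szero = f_A(\sigma_1\cdots\sigma_k)$, which is the desired identity, and both inclusions are read off from it.

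I do not expect a genuine obstacle here --- the result is a direct verification rather than a deep theorem. The one point requiring care is the bookkeeping of tensor mode indices and transposes: one must check that the contraction convention for the mode-$n$ vector product $\ttv{n}$, together with the ordering of the modes of $\Aten$, reproduces exactly the order and the transposition of the matrix product $(\A^{\sigma_1}\cdots\A^{\sigma_k})^{\top}$ appearing in the definition of $f_A$. Notably, since the inputs are one-hot vectors, the full multilinearity of $f_M$ is not even needed for this statement; it becomes essential only later, when extending from discrete symbols to sequences of continuous input vectors.
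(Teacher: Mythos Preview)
Your proposal is correct and follows essentially the same approach as the paper: the paper's proof also proceeds by induction on the sequence length to show $\h_k = (\A^{\sigma_1}\cdots\A^{\sigma_k})^\top\szero$, using precisely the slice-selection observation $\Aten\ttv{2}\x_i = \A^{\sigma_i}$ that you highlight, and then applies $\vvsinf$ to conclude. Your added remarks about the bijection giving both directions simultaneously and the bookkeeping of modes and transposes are accurate and make the write-up slightly more explicit than the paper's, but the argument is the same.
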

\begin{proof}
We first show by induction on $k$ that, for any sequence $\sigma_1\cdots\sigma_k\in\Sigma^*$, the hidden state $\h_k$ computed by $M$~(see
Eq.~\eqref{eq:2RNN.definition})
on the corresponding one-hot encoded sequence $\x_1,\cdots,\x_k\in\R^d$
satisfies $\h_k = (\A^{\sigma_1}\cdots\A^{\sigma_k})^\top\szero$. The case $k=0$
is immediate. Suppose the result true for sequences of length up to $k$. One can check easily check that $\Aten\ttv{2}\x_i = \A^{\sigma_i}$
for any index $i$. Using the induction hypothesis it then follows that
\begin{align*}
\h_{k+1} &= \Aten \ttv{1}\h_k \ttv{2} \x_{k+1} = \A^{\sigma_{k+1}}\ttv{1} \h_k = (\A^{\sigma_{k+1}})^\top \h_k\\
&= (\A^{\sigma_{k+1}})^\top (\A^{\sigma_1}\cdots\A^{\sigma_k})^\top\szero = (\A^{\sigma_1}\cdots\A^{\sigma_{k+1}})^\top\szero .
\end{align*} 
To conclude, we thus have
\begin{equation*}
f_M(\x_1,\x_2,\cdots,\x_k) = \vvsinf\h_{k} = \vvsinf(\A^{\sigma_1}\cdots\A^{\sigma_{k}})^\top\szero = f_A(\sigma_1\sigma_2\cdots\sigma_k).\qedhere
\end{equation*}
\end{proof}

\subsection{Proof of Theorem~\ref{thm:2RNN-SL}}

\begin{theorem*}
Let $f:(\R^d)^*\to \R^p$ be a function computed by a minimal linear $2$-RNN  with $n$ hidden units and let
$L$ be an integer such that $\rank(\tenmatgen{\Hten^{(2L)}_f}{L,L+1}) = n$.

Then, for any $\P\in\R^{d^L\times n}$ and $\S\in\R^{n\times d^Lp}$ such that $\tenmatgen{\Hten^{(2L)}_f}{L,L+1} = \P\S$, the
linear 2-RNN $M=(\szero,\Aten,\vvsinf)$ defined by
$$\szero = (\S\pinv)^\top\tenmatgen{\Hten^{(L)}_f}{L+1},\ \ \ \ \Aten = (\tenmatgen{\Hten^{(2L+1)}_f}{L,1,L+1})\ttm{1}\P\pinv\ttm{3}(\S\pinv)^\top,\ \ \ \ 
\vvsinf^\top = \P\pinv\tenmatgen{\Hten^{(L)}_f}{L,1}$$
is a minimal linear $2$-RNN computing $f$.
\end{theorem*}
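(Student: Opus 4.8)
The plan is to reduce the statement to the spectral learning of a vector-valued WFA and then conclude via the change-of-basis invariance of linear 2-RNNs established in Section~\ref{sec:prelim}. By Theorem~\ref{thm:2RNN-vvWFA}, the minimal linear 2-RNN $(\szero_0,\Aten_0,\vvsinf_0)$ computing $f$ induces a vv-WFA $A_0=(\szero_0,\{\A^i\}_{i\in[d]},\vvsinf_0)$ over the alphabet $[d]$, with $\A^i=(\Aten_0)_{:,i,:}$, and $\Hten_f$ is exactly the Hankel tensor of the function $\tilde f:[d]^*\to\R^p$ computed by $A_0$. First I would record that $\rank(\tilde f)=n$: the block $\tenmatgen{\Hten^{(2L)}_f}{L,L+1}$ is a sub-block of $\tenmatpar{\Hten_f}{1}$, so Theorem~\ref{thm:fliess-vvWFA} gives $n=\rank(\tenmatgen{\Hten^{(2L)}_f}{L,L+1})\leq\rank(\tenmatpar{\Hten_f}{1})=\rank(\tilde f)$, while $A_0$ witnesses $\rank(\tilde f)\leq n$. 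Hence $n=\rank(\tilde f)$, so that any recovered 2-RNN with $n$ units computing $f$ will automatically be minimal.

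The heart of the argument is to read off explicit rank-$n$ factorizations of the three reshapings appearing in the statement directly from the TT structure $\Hten^{(l)}_f=\TT{\Aten_0\ttv{1}\szero_0,\Aten_0,\cdots,\Aten_0,\vvsinf_0^\top}$. Introducing the forward matrix $\mat{P}_\star\in\R^{d^L\times n}$ whose $(i_1\cdots i_L)$-th row is $\szero_0^\top\A^{i_1}\cdots\A^{i_L}$ and the backward matrix $\mat{S}_\star\in\R^{n\times d^Lp}$ whose $(i_{L+1}\cdots i_{2L},r)$-th column is $\A^{i_{L+1}}\cdots\A^{i_{2L}}(\vvsinf_0)_{r,:}^\top$, a direct expansion of the WFA computation gives
\begin{align*}
\tenmatgen{\Hten^{(2L)}_f}{L,L+1}&=\mat{P}_\star\mat{S}_\star, & \tenmatgen{\Hten^{(L)}_f}{L+1}&=\mat{S}_\star^\top\szero_0,\\
\tenmatgen{\Hten^{(L)}_f}{L,1}&=\mat{P}_\star\vvsinf_0^\top, & \tenmatgen{\Hten^{(2L+1)}_f}{L,1,L+1}&=\Aten_0\ttm{1}\mat{P}_\star\ttm{3}\mat{S}_\star^\top.
\end{align*}
Since $\mat{P}_\star\mat{S}_\star$ has rank $n$, both factors must have full rank $n$, so for the given factorization there is a unique invertible $\M\in\R^{n\times n}$ with $\P=\mat{P}_\star\M$ and $\S=\M\inv\mat{S}_\star$.

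It then remains to substitute these relations into the definitions of $\szero$, $\vvsinf$ and $\Aten$. Using $\P\pinv\P=\I_n$ and $\S\S\pinv=\I_n$ (valid since $\P,\S$ have full rank $n$), the relations $\P=\mat{P}_\star\M$ and $\S=\M\inv\mat{S}_\star$ yield the pseudo-inverse identities $\P\pinv\mat{P}_\star=\M\inv$ and $\mat{S}_\star\S\pinv=\M$. Plugging these into the recovery formulas, together with the identity $(\T\ttm{n}\A)\ttm{n}\B=\T\ttm{n}\B\A$ for the transition tensor, gives
\begin{equation*}
\szero=\M^\top\szero_0,\qquad \vvsinf=\vvsinf_0\M\invtop,\qquad \Aten=\Aten_0\ttm{1}\M\inv\ttm{3}\M^\top .
\end{equation*}
This is precisely the change-of-basis transform of $(\szero_0,\Aten_0,\vvsinf_0)$ by $\M$ described in Section~\ref{sec:prelim}, whence $f_M=f$; since $M$ has $n$ hidden units and $n=\rank(\tilde f)$, it is minimal.

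The main obstacle I anticipate is purely one of bookkeeping rather than of ideas: correctly tracking the index orderings of the various reshapings $\tenmatgen{\cdot}{\cdots}$ so that rows, columns and the contracted modes in the mode-$n$ products line up consistently across the four factorizations, and carefully justifying the pseudo-inverse identities from the full-rank properties. Everything else reduces to the multilinearity of $f$ (which underlies the TT form of $\Hten^{(l)}_f$) and routine linear algebra.
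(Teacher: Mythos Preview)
Your proposal is correct and follows essentially the same route as the paper's proof: both introduce the forward/backward factors $\mat{P}_\star,\mat{S}_\star$ induced by the minimal 2-RNN, establish the four factorizations of the Hankel reshapings, relate any rank-$n$ factorization $\P\S$ to $\mat{P}_\star\mat{S}_\star$ via an invertible $\M$, and conclude by change-of-basis invariance. The only cosmetic differences are that the paper defines $\M=\P\pinv\mat{P}_\star$ (the inverse of your $\M$) and works directly with the TT decomposition rather than first passing through the vv-WFA picture; your explicit argument that $\rank(\tilde f)=n$ to secure minimality is a small addition the paper leaves implicit.
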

\begin{proof}
Let $\P\in\R^{d^L\times n}$ and $\S\in\R^{n\times d^Lp}$ be such that $\tenmatgen{\Hten^{(2L)}_f}{L,L+1} = \P\S$
Define the tensors 
$$\Pten^* = \TT{\Aten^\star\ttv{1}\szero^\star, \underbrace{\Aten^\star, \cdots, \Aten^\star}_{L-1\text{ times}}, \I_n}\in\R^{d\times\cdots\times d\times n}\ \ \ \  
\text{ and }\ \ \ \ 
\Sten^* = \TT{\I_n,\underbrace{\Aten^\star, \cdots, \Aten^\star}_{L\text{ times}}, \vvsinf^\star}\in\R^{n\times d\times\cdots\times d\times p}$$ 
of order $L+1$ and $L+2$ respectively, and let $\P^\star = \tenmatgen{\Pten^*}{l,1} \in\R^{d^l\times n}$ 
and $\S = \tenmatgen{\Sten^*}{1,L+1}  \in\R^{n\times d^lp}$. Using the identity $\Hten^{(j)}_f = \TT{\Aten\ttv{1}\szero, \underbrace{\Aten, \cdots, \Aten}_{j-1\text{ times}}, \vvsinf^\top}$ for
any $j$, one can easily check the following identities:
\begin{gather*}
\tenmatgen{\Hten^{(2L)}_f}{L,L+1} = \P^\star\S^\star,\ \  \ \ \tenmatgen{\Hten^{(2L+1)}_f}{L,1,L+1}= \Aten^\star \ttm{1} \P^\star \ttm{3} (\S^\star)^\top,\\
\tenmatgen{\Hten^{(L)}_f}{L,1} = \P^\star(\vvsinf^\star)^\top, \ \ \ \ \ \ \
\tenmatgen{\Hten^{(L)}_f}{L+1} = (\S^\star)^\top\szero.
\end{gather*}

Let $\M = \P\pinv\P^\star$. We will show that $\szero = \M\invtop\szero^\star$, $\Aten = \Aten^\star \ttm{1}\M\ttm{3}\M\invtop$ and
$\vvsinf = \M\vvsinf^\star$, which will entail the results since linear 2-RNN are invariant under change of basis~(see Section~\ref{sec:prelim}). First observe that $\M\inv = \S^\star\S\pinv$. Indeed,
we have
$\P\pinv\P^\star\S^\star\S\pinv = \P\pinv\tenmatgen{\Hten^{(2l)}_f}{l,l+1}\S\pinv = \P\pinv\P\S\S\pinv = \I$
where we used the fact that $\P$~(resp. $\S$) is of full column rank~(resp. row rank) for the last equality. 

The following derivations then follow from basic tensor algebra:
\begin{align*}
\szero 
&= 
(\S\pinv)^\top\tenmatgen{\Hten^{(L)}_f}{L+1} 
=
(\S\pinv)^\top (\S^\star)^\top\szero
=
(\S^\star\S\pinv)^\top
=
\M\invtop\szero^\star,\\
\ \\
\Aten 
&= 
(\tenmatgen{\Hten^{(2L+1)}_f}{L,1,L+1})\ttm{1}\P\pinv\ttm{3}(\S\pinv)^\top\\
&=
(\Aten^\star \ttm{1} \P^\star \ttm{3} (\S^\star)^\top)\ttm{1}\P\pinv\ttm{3}(\S\pinv)^\top\\
&=
\Aten^\star \ttm{1} \P\pinv\P^\star \ttm{3} (\S^\star\S\pinv)^\top =  \Aten^\star \ttm{1}\M\ttm{3}\M\invtop,\\
\ \\
\vvsinf^\top 
&= 
\P\pinv\tenmatgen{\Hten^{(L)}_f}{L,1}
=
\P\pinv\P^\star(\vvsinf^\star)^\top
=
\M\vvsinf^\star,
\end{align*}
which concludes the proof.
\end{proof}

\subsection{Proof of Theorem~\ref{thm:learning-2RNN}}

\begin{theorem*}
Let $(\h_0,\Aten,\vvsinf)$ be a minimal linear 2-RNN with $n$ hidden units computing a function $f:(\R^d)^*\to \R^p$, and let $L$ be an integer\footnote{Note that the theorem can be adapted if such an integer $L$ does not exists~(see supplementary material).}
such that $\rank(\tenmatgen{\Hten^{(2L)}_f}{L,L+1}) = n$.

Suppose we have access to $3$ datasets
$D_l = \{((\x^{(i)}_1,\x_2^{(i)},\cdots,\x_l^{(i)}),\y^{(i)}) \}_{i=1}^{N_l}\subset(\R^d)^l\times \R^p$ for $l\in\{L,2L,2L+1\}$ 
where the entries of each $\x^{(i)}_j$ are drawn independently from the standard normal distribution and where each
$\y^{(i)} = f(\x^{(i)}_1,\x_2^{(i)},\cdots,\x_l^{(i)})$.

Then, whenever $N_l \geq d^l$ for each $l\in\{L,2L,2L+1\}$,
the linear 2-RNN $M$ returned by Algorithm~\ref{alg:2RNN-SL} with the least-squares method satisfies $f_M = f$ with probability one.
\end{theorem*}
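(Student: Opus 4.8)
The plan is to show that, in the noiseless setting, the least-squares step of Algorithm~\ref{alg:2RNN-SL} recovers the \emph{exact} Hankel tensors $\Hten^{(L)}_f$, $\Hten^{(2L)}_f$ and $\Hten^{(2L+1)}_f$ with probability one, and then to invoke Theorem~\ref{thm:2RNN-SL} to conclude that the resulting linear 2-RNN computes $f$. The reduction is immediate: Theorem~\ref{thm:2RNN-SL} guarantees that the closed-form expressions for $\szero$, $\Aten$ and $\vvsinf$ used in the last line of Algorithm~\ref{alg:2RNN-SL} yield a minimal linear 2-RNN computing $f$, \emph{provided} these expressions are built from the true Hankel tensors and from a genuine rank-$n$ factorization $\tenmatgen{\Hten^{(2L)}_f}{L,L+1}=\P\S$. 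Hence the entire argument hinges on the exact recovery of the three Hankel tensors from the data.

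For the recovery step, I would first recall the linear-measurement identity derived just before the theorem: for each $l$ and each example, $\y^{(i)} = \tenmatgen{\Hten^{(l)}_f}{l,1}^\top\x^{(i)}$ with $\x^{(i)}=\x^{(i)}_1\kron\cdots\kron\x^{(i)}_l$. Stacking the $N_l$ examples row-wise gives $\Ymat = \X\tenmatgen{\Hten^{(l)}_f}{l,1}$, so the true matricized Hankel tensor is \emph{a} solution of the least-squares system solved on line~\ref{alg.line.lst-sq}. If $\X\in\R^{N_l\times d^l}$ has full column rank, this solution is unique and therefore coincides with the least-squares minimizer returned by the algorithm; reshaping then recovers $\Hten^{(l)}_f$ exactly. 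Thus it remains only to show that each data matrix $\X$ has full column rank $d^l$ with probability one.

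The main obstacle --- and the only place where probabilistic reasoning enters --- is precisely this full-rank claim, which I would establish via a standard measure-zero argument. Since $N_l\geq d^l$, it suffices to exhibit $d^l$ rows of $\X$ whose determinant is nonzero. The determinant of any $d^l\times d^l$ submatrix of $\X$ is a polynomial in the entries of the vectors $\{\x^{(i)}_j\}$, and this polynomial is not identically zero: choosing the $i$-th input tuple so that $\x^{(i)}_1\kron\cdots\kron\x^{(i)}_l$ equals the canonical basis vector $\e_{i_1}\kron\cdots\kron\e_{i_l}$ of $\R^{d^l}$ --- as $(i_1,\cdots,i_l)$ ranges over all of $[d]^l$ --- turns the submatrix into a permutation matrix, whose determinant is $\pm 1$. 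A nonzero polynomial vanishes only on a set of Lebesgue measure zero, and since the entries of the $\x^{(i)}_j$ are drawn from a standard normal distribution (absolutely continuous with respect to Lebesgue measure), the event that \emph{every} $d^l\times d^l$ minor of $\X$ vanishes has probability zero. Hence $\X$ has full column rank almost surely, for each $l\in\{L,2L,2L+1\}$ simultaneously by a union bound over the three (probability-zero) failure events.

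Combining these steps, with probability one the least-squares method returns the exact Hankel tensors; since $\rank(\tenmatgen{\Hten^{(2L)}_f}{L,L+1})=n$, the rank-$n$ factorization on line~\ref{alg.line.svd} satisfies the hypotheses of Theorem~\ref{thm:2RNN-SL}, which then yields $f_M=f$. The one technical point I would take care to spell out is that full column rank of $\X$ makes $\X^\top\X$ invertible, so the least-squares objective $\norm{\X\tenmatgen{\T}{l,1}-\Ymat}_F^2$ has a \emph{unique} minimizer; since the true Hankel tensor attains the global minimum value of zero, the algorithm's output is forced to equal it rather than merely admitting it as one of several candidate minimizers.
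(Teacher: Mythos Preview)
Your proposal is correct and follows essentially the same route as the paper: reduce to exact recovery of the three Hankel tensors by showing each design matrix $\X$ has full column rank almost surely, then invoke Theorem~\ref{thm:2RNN-SL}. The only cosmetic difference is that the paper works with the polynomial $\det(\X^\top\X)$ and argues it is not identically zero by noting one can choose inputs making the rows span $\R^{d^l}$, whereas you pick a specific $d^l\times d^l$ minor and exhibit an explicit witness (canonical basis vectors yielding a permutation matrix); both arguments land on the same proper-algebraic-variety/measure-zero conclusion.
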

\begin{proof}
We just need to show for each $l\in \{L,2L,2L+1\}$ that, under the hypothesis of the Theorem, the Hankel tensors $\hat{\Hten}^{(l)}$ computed in line~\ref{alg.line.lst-sq} of
Algorithm~\ref{alg:2RNN-SL} are equal to the true Hankel tensors $\Hten^{(l)}$ with probability one. Recall that these tensors are computed by solving the least-squares
problem
$$\hat{\Hten}^{(l)} = \argmin_{T\in \R^{d\times\cdots\times d\times p}} \norm{\X\tenmatgen{\T}{l,1} - \Ymat}_F^2$$
where $\X\in\R^{N_l\times d_l}$ is the matrix  with rows $\x^{(i)}_1\kron\x_2^{(i)}\kron\cdots\kron\x_l^{(i)}$ for each $i\in[N_l]$. Since  $\X\tenmatgen{\Hten^{(l)}}{l,1} = \Ymat$ and since the solution
of the least-squares problem is unique as soon as $\X$ is of full column rank, we just need to show that this is the case with probability one
when the entries of the vectors $\x^{(i)}_j$ are drawn at random from a standard normal distribution. The result will  then directly follow
by applying Theorem~\ref{thm:2RNN-SL}.

We will show that the set 
$$\Scal = \{ (\x_1^{(i)},\cdots, \x_l^{(i)}) \mid \ i\in[N_l],\ dim(span(\{ \x^{(i)}_1\kron\x_2^{(i)}\kron\cdots\kron\x_l^{(i)} \})) < d^l\} $$
has Lebesgue measure $0$ in $((\R^d)^{l})^{N_l}\simeq \R^{dlN_l}$ as soon as $N_l \geq d^l$, which will imply that it has probability $0$ under any continuous probability, hence
the result. For any  $S=\{(\x_1^{(i)},\cdots, \x_l^{(i)})\}_{i=1}^{N_l}$, we denote  by $\X_S\in\R^{N_l\times d^l}$  the matrix  with rows $\x^{(i)}_1\kron\x_2^{(i)}\kron\cdots\kron\x_l^{(i)}$.
One can easily check that $S\in\Scal$ if and only if $\X_S$ is of rank strictly less than $d^l$, which is equivalent to the determinant of 
$\X_S^\top\X_S$ being equal to $0$. Since this determinant is a polynomial in the entries of the vectors $\x_j^{(i)}$, $\Scal$ is an algebraic
subvariety of $\R^{dlN_l}$.
It is then easy to check that the polynomial $det(\X_S^\top\X_S)$ is not uniformly 0 when $N_l \geq d^l$. Indeed, 
it suffices to choose the vectors $\x_j^{(i)}$ such that the family  $(\x^{(i)}_1\kron\x_2^{(i)}\kron\cdots\kron\x_l^{(i)})_{n=1}^{N_l}$ spans the whole space 
$\R^{d^l}$~(which is possible since we can choose arbitrarily any of the $N_l\geq d^l$ elements of this family),  hence the result. 
In conclusion, $\Scal$ is a proper algebraic subvariety of $\R^{dlN_l}$ and hence has Lebesgue
measure zero~\cite[Section 2.6.5]{federer2014geometric}.

\end{proof}

\section{Lifting the simplifying assumption}\label{app:lift}
We now show how all our results still hold when there does not exist an $L$ such that $\rank(\tenmatgen{\Hten^{(2L)}_f}{L,L+1}) = n$.
Recall that this simplifying assumption followed from assuming that the sets $\Pcal=\Scal=[d]^L$ form a complete basis for the function
$\tilde{f}:[d]^*\to \R^p$ defined by $\tilde{f}(i_1i_2\cdots i_k) = f(\e_{i_1},\e_{i_2},\cdots,\e_{i_k})$. 
We first show that there always exists an integer $L$ such that $\Pcal=\Scal=\cup_{i\leq L} [d]^i$ forms a complete basis for $\tilde{f}$.
 Let $M = (\szero^\star,\Aten^\star,\vvsinf^\star)$  be a linear 2-RNN with $n$ hidden units computing
$f$~(\ie such that $f_M=f$). It follows from Theorem~\ref{thm:2RNN-vvWFA} and from the discussion  at the beginning of Section~\ref{subsec:SL-2RNN}
that there exists a vv-WFA computing $\tilde{f}$ and it is easy to check that $\rank(\tilde{f}) = n$.  This implies $\rank(\tenmatpar{\Hten_f}{1}) = n$ by
Theorem~\ref{thm:fliess-vvWFA}. Since  $\Pcal=\Scal=\cup_{i\leq l} [d]^i$ converges to $[d]^*$ as $l$ grows to infinity, there exists an $L$ such that 
the finite sub-block $\tilde{\Hten}_f \in \R^{\Pcal\times\Scal\times p}$ of $\Hten_f\in \R^{[d]^*\times[d]^*\times p}$
satisfies $\rank(\tenmatpar{\tilde{\Hten}_f}{1}) = n$, \ie such that
$\Pcal=\Scal=\cup_{i\leq L} [d]^i$ forms a complete basis for $\tilde{f}$. 

Now consider the finite sub-blocks $\tilde{\Hten}^{+}_f\in \R^{\Pcal \times [d] \times \Scal \times p}$ and  $\tilde{\H}^{-}_f\in \R^{\Pcal  \times p}$ of $\Hten_f$  defined by
$$(\tilde{\Hten}^{+}_f)_{u,i,v,:}=\tilde{f}(uiv),\ \ \ \text{and} (\tilde{\H}^{-}_f)_{u,:}=f(u)$$
for any $u\in \Pcal=\Scal$ and any $i\in [d]$. One can check that Theorem~\ref{thm:2RNN-SL} holds by replacing \emph{mutatis mutandi}
$\tenmatgen{\Hten^{(2L)}_f}{L,L+1}$ by $\tenmatpar{\tilde{\Hten}_f}{1}$, $\tenmatgen{\Hten^{(2L+1)}_f}{L,1,L+1}$ by  $\tilde{\Hten}^{+}_f$, $\tenmatgen{\Hten^{(L)}_f}{L,1}$ by $\tilde{\H}^{-}_f$
and $\tenmatgen{\Hten^{(L)}_f}{L+1}$ by $\vectorize{\tilde{\H}^{-}_f}$.

To conclude, it suffices to observe that both $\tilde{\Hten}^{+}_f$ and  $\tilde{\H}^{-}_f$ can be constructed
from the entries for the tensors $\Hten^{(l)}$ for $1\leq l \leq 2L+1$, which can be recovered~(or estimated in the noisy setting) using
the techniques described in Section~\ref{subsec:Hankel.tensor.recovery}~(corresponding to lines \ref{alg.firstline.forloop}-\ref{alg.lastline.forloop} of
Algorithm~\ref{alg:2RNN-SL}).

We thus showed that linear 2-RNNs can be provably learned even when  there does not exist an $L$ such that $\rank(\tenmatgen{\Hten^{(2L)}_f}{L,L+1}) = n$. In this
setting, one needs to estimate enough of the tensors $\Hten^{(l)}$ to reconstruct a complete sub-block $\tilde{\Hten}_f$ of the Hankel tensor 
$\Hten$~(along with the corresponding tensor $\tilde{\Hten}^{+}_f$ and matrix $\tilde{\H}^{-}_f$)
and recover the linear 2-RNN by applying Theorem~\ref{thm:2RNN-SL}. In addition, one needs to have access to sufficiently large datasets 
$D_l$ for each $l\in [2L+1]$ rather than only the three datasets mentioned in Theorem~\ref{thm:learning-2RNN}.  However the data requirement remains the
same in the case where we assume that each of the datasets $D_l$ is constructed from a unique training  dataset 
$S =\{((\x^{(i)}_1,\x_2^{(i)},\cdots,\x_T^{(i)}),(\y^{(i)}_1,\y^{(i)}_2,\cdots,\y^{(i)}_T)) \}_{i=1}^{N}$
of input/output sequences.

\section{Leveraging the tensor train structure for computational efficiency}
The overall learning algorithm using the TIHT recovery method in TT format is summarized in Algorithm~\ref{alg:2RNN-SL-TT}. The key ingredients to improve the complexity of Algorithm~\ref{alg:2RNN-SL} are (i) to estimate the gradient using mini-batches of data and  (ii) to directly use the TT format to represent and perform operations on the tensors $\Hten^{(l)}$ and the tensors $\Xten^{(l)}\in\R^{M\times d\times \cdots \times d}$ defined by
\begin{equation}\label{eq:Xten}
 \Xten_{i,:,\cdots,:}= \x^{(i)}_1\kron\x_2^{(i)}\kron\cdots\kron\x_l^{(i)}\ \  \ \text{for }i\in[M]
\end{equation}
where $M$ is the size of a mini-batch of training data~($\Hten^{(l)}$ is of TT-rank $R$ by design and it can easily be shown that $\Xten^{(l)}$ is of TT-rank at most $M$, cf. Eq.~\eqref{eq:XtenTT}). 
Then, all the operations of the algorithm can be expressed in terms of these tensors and performed efficiently in TT format.
More precisely, the products and sums needed to compute the gradient update on line~\ref{algtt.line.iht.gradient} can be performed in~$\bigo{(R+M)^2(ld+p)+(R+M)^3d}$. After the gradient update, the tensor $\Hten^{(l)}$ has TT-rank at most $(M+R)$ but can be efficiently projected back to a tensor of TT-rank $R$ using the tensor train rounding operation~\cite{oseledets2011tensor} in $\bigo{(R+M)^3(ld+p)}$~(which is the operation dominating the complexity of the whole algorithm).
The subsequent operations on line~\ref{line.algtt.return} can be performed efficiently in the TT format in~$\bigo{R^3d+R^2p}$~(using the method described in~\cite{klus2018tensor} to compute the pseudo-inverses of the matrices $\P$ and $\S$). 
The overall complexity of Algorithm~2 is thus in~$\bigo{T(R+M)^3(Ld+p)}$ where $T$ is the number of iterations of the inner loop.

\begin{algorithm}
   \caption{\texttt{2RNN-SL-TT}: Spectral Learning of linear 2-RNNs  \textbf{in tensor train format}}
   \label{alg:2RNN-SL-TT}
\begin{algorithmic}[1]
   \REQUIRE Three training datasets $D_L,D_{2L},D_{2L+1}$ with input sequences of length $L$, $2L$ and $2L+1$ respectively, rank $R$, learning rate $\gamma$ and
   mini-batch size $M$.
   \FOR{$l\in\{L,2L,2L+1\}$}
   \STATE Initialize all cores of the rank $R$ TT-decomposition $\Hten^{(l)} = \TT{\Gten^{(l)}_1,\cdots,\Gten^{(l)}_{l+1}} \in \R^{d\times\cdots\times d\times p}$ to $\mat{0}$.\\
   // \emph{Note that all the updates of  $\Hten^{(l)}$ stated below are in effect applied directly to the core tensors $\Gten^{(l)}_k$, i.e. the tensor $\Hten^{(l)}$ is never 
   explicitely constructed.}
   
   \REPEAT\label{algtt.line.iht.start}
   \STATE Subsample a minibatch $$\{((\x^{(i)}_1,\x_2^{(i)},\cdots,\x_l^{(i)}),\y^{(i)}) \}_{i=1}^{M}\subset(\R^d)^l\times \R^p$$ of size $M$ from $D_l$.
   \STATE Compute the rank $M$ TT-decomposition of the tensor $\Xten=\Xten^{(l)}$~(defined in Eq.~\eqref{eq:Xten}), which is given by
   \begin{equation}\label{eq:XtenTT}
       \Xten = \TT{\I_M, \Aten_1,\cdots,\Aten_l} \text{ where the cores are defined by } (\Aten_k)_{i,:,j}=\delta_{ij}\x^{(i)}_k \ \ \text{ and } \ \  (\Aten_l)_{i,:} = \x^{(i)}_k
   \end{equation}
   for all $1\leq k <l$, $i,j\in[M]$, where $\delta$ is the Kroencker symbol.
   \STATE\label{algtt.line.iht.gradient} Perform the gradient update using efficient addition and product operations in TT format~(see~\cite{oseledets2011tensor}):
   $$\tenmatgen{\Hten^{(l)}}{l,1} = \tenmatgen{\Hten^{(l)} }{l,1} + \gamma\tenmatgen{\Xten}{1,l}^\top(\Ymat - \tenmatgen{\Xten}{1,l}\tenmatgen{\Hten^{(l)} }{l,1})$$
   \STATE Project the Hankel tensor $\Hten^{(l)}$~(which is now of rank at most $R+M$) back onto the manifold of tensor of TT-rank $R$ using the TT rounding
   operation~(see again~\cite{oseledets2011tensor}):
   $$\Hten^{(l)} = \texttt{TT-rounding}(\Hten^{(l)},R)$$
   \UNTIL{convergence}\label{algtt.line.iht.end}
   \ENDFOR
   \STATE\label{algtt.line.svd} Let $\P = \tenmatgen{\TT{\Gten^{(2L)}_1,\cdots,\Gten^{(2L)}_L,\I_R}}{L,1}$ and
   $\S = \tenmatgen{\TT{\I_R,\Gten^{(2L)}_{L+1},\cdots,\Gten^{(2L)}_{2L+1}}}{1,L+1}$~(observe that
   $\tenmatgen{\Hten^{(2L)}}{L,L+1} = \P\S$ is a rank $R$ factorization).
   \STATE\label{line.algtt.return} Return the linear 2-RNN $(\h_0,\Aten,\vvsinf)$ where 
   \begin{align*}
    \szero\ &= (\S\pinv)^\top\tenmatgen{\Hten^{(L)}_f}{L+1}\\
    \Aten\ &= (\tenmatgen{\Hten^{(2L+1)}_f}{L,1,L+1})\ttm{1}\P\pinv\ttm{3}(\S\pinv)^\top\\
\vvsinf^\top &= \P\pinv\tenmatgen{\Hten^{(L)}_f}{L,1}
\end{align*}  
by performing efficient computations in TT format for the products~\cite{oseledets2011tensor} and pseudo-inverses~(see e.g. \cite{klus2018tensor}).
\end{algorithmic}
\end{algorithm}

\section{Real Data Experiment on Wind Speed Prediction}\label{app:real}
Besides the synthetic data experiments we showed in the paper, we have also conducted experiments on real data. The data that we use for the these experiments is from TUDelft\footnote{http://weather.tudelft.nl/csv/}. Specifically, we use the data from Rijnhaven station as described in~\cite{lin2016short}, which proposed a regression automata model and performed various experiments on the dataset we mentioned above. The data contains wind speed and related information at the Rijnhaven station from  2013-04-22 at 14:55:00 to 2018-10-20 at 11:40:00 and was collected every five minutes. To compare to the results in~\cite{lin2016short}, we strictly followed the data preprocessing procedure described in the paper. We use the data from 2013-04-23 to 2015-10-12 as training data and the rest as our testing data.  The paper uses SAX as a preprocessing method to discretize the data. However, as there is no need to discretize data for our algorithm, we did not perform this procedure. For our method, we set the length $L = 3$ and we use the general algorithm described in Appendix~\ref{app:lift}. We calculate hourly averages of the wind speed, and predict one/three/six hour(s) ahead, as in~\cite{lin2016short}.  For our methods we use a linear 2-RNN with 10 states. Averages over 5 runs of this experiment for one-hour-ahead, three-hour-ahead, six-hour-ahead prediction error can be found in Table~\ref{one_hour}, \ref{three_hours} and \ref{six_hours}. The results for RA, RNN and persistence are taken directly from~\cite{lin2016short}. We see that while TIHT+SGD performs slightly worse than ARIMA and RA for one-hour-ahead prediction, it outperforms all other methods for three-hours and six-hours ahead predictions~(and the superiority w.r.t. other methods increases as the prediction horizon gets longer).

\begin{table}[h]
\caption{One-hour-ahead Speed Prediction Performance Comparisons}
\centering
\begin{tabular}{c|cccccc}
Method & TIHT     & TIHT+SGD & \begin{tabular}[c]{@{}c@{}}Regression \\ Automata\end{tabular} & ARIMA & RNN & Persistence\\ \hline
RMSE   & 0.573  & 0.519   & 0.500    & \textbf{0.496 }& 0.606 & 0.508                                                    \\
MAPE   & 21.35  & 18.79    & \textbf{18.58}   & 18.74 & 24.48 & 18.61                                                    \\
MAE    & 0.412  & 0.376    & 0.363    & \textbf{0.361} & 0.471 & 0.367                                                    
\end{tabular}%
\label{one_hour}
\end{table}
\begin{table}[h]
\caption{Three-hour-ahead Speed Prediction Performance Comparisons}
\centering
\begin{tabular}{c|cccccc}
Method & TIHT     & TIHT+SGD & \begin{tabular}[c]{@{}c@{}}Regression \\ Automata\end{tabular} & ARIMA & RNN & Persistence\\ \hline
RMSE   & 0.868  & \textbf{0.854}    & 0.872  & 0.882 & 1.002 & 0.893                                                       \\
MAPE   & 33.98  & \textbf{31.70}    & 32.52 & 33.165 & 37.24 & 33.29                                                      \\
MAE    & 0.632  & \textbf{0.624}    & 0.632 &0.642 & 0.764 & 0.649                                                       
\end{tabular}
\label{three_hours}
\end{table}
\begin{table}[h]
\caption{Six-hour-ahead Speed Prediction Performance Comparisons}
\centering
\begin{tabular}{c|cccccc}
Method & TIHT   & TIHT+SGD & \begin{tabular}[c]{@{}c@{}}Regression \\ Automata\end{tabular} & ARIMA & RNN & Persistence\\ \hline
RMSE   & 1.234  & \textbf{1.145}    & 1.205  & 1.227 & 1.261 & 1.234                                                      \\
MAPE   & 49.08  & \textbf{44.88}    & 46.809  &48.02 & 47.03 & 48.11                                                      \\
MAE    & 0.940  & \textbf{0.865}    & 0.898  & 0.919 & 0.944 & 0.923            
\end{tabular}
\label{six_hours}
\end{table}

\end{document}